\newtheorem{theorem}{Theorem}
\newtheorem{corollary}{Corollary}
\newtheorem{lemma}{Lemma}
\newtheorem{assumption}{Assumption}
\newtheorem{remark}{Remark}
\DeclareMathOperator*{\argmin}{arg\,min}
\DeclareMathOperator*{\argmax}{arg\,max}
\newcommand{\comment}[1]{}
\newcommand{\remove}[1]{}
\newcommand{\add}[1]{#1}
\newcommand{\remove}[1]{#1}
\newcommand{\add}[1]{}
\newcommand{\rmv}[1]{}
\newcommand{\rmv}[1]{{\color{red}#1}}
\newcommand{\armv}[1]{}
\newcommand{\armv}[1]{{\color{red}#1}}
\newcommand{\rev}[1]{{\color{blue}#1}} %revise of the text
\newcommand{\cem}[1]{{\color{magenta}#1}} %revise of the text
\newcommand{\com}[1]{\textbf{\color{red}(COMMENT: #1)}} %comment of the text
\newcommand{\clar}[1]{\textbf{\color{green}(NEED CLARIFICATION: #1)}}
\newcommand{\rev}[1]{#1}
\newcommand{\cem}[1]{#1}
\newcommand{\com}[1]{}
\newcommand{\clar}[1]{}
\begin{document}
\title{Distributed Online Big Data Classification Using Context Information}
%\author{Cem Tekin, Mihaela van der Schaar
%\thanks{}
%}
\author{\IEEEauthorblockN{Cem Tekin, Mihaela van der Schaar\\}
\IEEEauthorblockA{Department of Electrical Engineering,
University of California, Los Angeles\\
Email: \{cmtkn, mihaela\}@ucla.edu}
}

\maketitle

\begin{abstract}
%Distributed data mining systems have emerged as a result of applications requiring analysis of large amounts of distributed, correlated and high dimensional data produced by multimedia applications, social networks, web pages, etc. 
Distributed, online data mining systems have emerged as a result of applications requiring analysis of large amounts of correlated and high-dimensional data produced by multiple distributed data sources. 
\rmv{To correctly identify the events and phenomena of interest, different classification functions are required in order to process data exhibiting different \cem{features/}characteristics.}
%Different data requires different classification functions to be identified correctly. 
%However, a classifier may not have access to the set of all classification functions or cannot a-priori know the accuracy of each classification function or the distribution of the data stream. 
\rmv{However, a learner may
not have access to the complete set of necessary classification functions to process a certain type of data. 
Moreover, the accuracy of each classification function for the incoming data stream is unknown a priori and
needs to be learned online.}
\cem{We propose a} distributed online data classification framework
where data is gathered by distributed data sources and processed by a heterogeneous set of distributed learners which
learn online, at run-time, 
how to classify the different data streams either by using their locally available classification functions or by helping each other by classifying each other's data. 
%In particular, the classification functions available to each classifier is fixed, but their prediction accuracy is unknown and can change dynamically over time.
\rev{Importantly, since the data is gathered at different locations, sending the data to another learner to process incurs additional costs such as delays, and hence this will be only beneficial if the benefits obtained from a better classification will exceed the costs.}
\armv{
We assume that the classification functions available to each processing element are fixed, but their prediction
accuracy for various types of incoming data are unknown and can change dynamically over time, and thus they need to be learned online.
}
%
%We approach this problem as a distributed contextual bandit problem where each data has a specific context and we provide sublinear regret bounds which implies the average error probability converges to the error probability of the best distributed classification scheme given the context information.
We model \cem{the problem of joint classification by the
distributed and heterogeneous learners from multiple data sources}
as a distributed contextual
bandit problem where each data 
is characterized by
a specific context. We develop a distributed online learning algorithm for which we can prove sublinear regret\rmv{, i.e. the
average error probability converges to the error probability of the best distributed classification scheme given the context information}.
\rmv{Our bounds hold uniformly over time, without requiring any assumptions about of the types of classification functions used.}
%Compared to the prior work, ours is the first to provide analytical regret results on the performance of distributed online data mining algorithms. 
Compared to prior work in distributed online data mining, our work is the first to provide analytic regret results characterizing the performance of the proposed algorithm.
%We relate our distributed contextual learning approach to concept drift, and provide numerical results related to network security.
\comment{\cem{We also relate our distributed contextual learning approach to the notion of concept drift, which was introduced to address non-stationary learning problems over time, and we show that sublinear regret can be achieved when the concept drift is gradual without requiring a drift detection mechanism.}}
\rmv{Finally, we illustrate our proposed solutions using \cem{distributed} online data mining systems for network security and compare our results with existing state of the art solutions for online data mining.}
\end{abstract}

\add{\vspace{-0.2in}}
\section{Introduction}\label{sec:intro}

A plethora of Big Data applications (network security, surveillance, health monitoring etc.) are emerging which
require online classification of large data sets collected from distributed network and traffic monitors, multimedia sources, sensor networks, etc.
This data is heterogeneous and dynamically evolves over time.
%which makes traditional classification schemes costly and error-prone. A traditional optimal centralized classification scheme requires information about data distribution and access to a large set of classification functions which makes it computationally infeasible. 
%
In this paper, we introduce a distributed online learning framework for classification of high-dimensional data
collected by distributed data sources. 

\cem{The distributedly collected data is processed by a set of decentralized heterogeneous learners equipped with classification functions with unknown accuracies.} \rev{In this setting communication, computation and sharing costs make it infeasible to use centralized data mining techniques where a single learner can access the entire data set.}
\rev{For example, in a wireless sensor surveillance network, nodes in different locations collect different information about different events. 
The learners/machines at each node of the network may run different classification algorithms, may have different resolution, processing speed, etc. 
\rmv{An event may happen rarely in one location while frequently in another location. Therefore, if the context implies that the event is a rare event, then it can be sent to a learner for which this event happens frequently to be classified.}
%
%In social networks, data originating in different countries have different characteristics based on the language and trends in these countries. Based on its context (e.g., language), it can be forwarded to a learner in another country so be labeled more accurately. 
}

%
%The following events happen sequentially at each time slot: (i) input data stream with a specific context arrives to each classifier, (ii) each classifier chooses a classification function or another classifier and produces a label, (iii) true label is revealed.
%
The input data stream and its associated context can be time-varying and heterogeneous. \cem{We use the term ``context'' generically, to represent}
any information related to the input data stream such as time, location and type \cem{(e.g., data features/characteristics/modality)} information.
Each learner can process (label) the incoming data in two different ways: either it can 
exploit its own information and its own classification functions or it can forward its input stream to another learner (possibly by incurring some cost) to have it labeled. A learner learns the accuracies of its own classification functions or other learners in an online way by comparing the result of the predictions with the true label of its input stream which is revealed at the end of each slot. The goal of each learner is to maximize its long term expected total reward, which is the expected number of correct labels minus the costs of classification. In this paper the cost is a generic term that can represent any known cost such as processing cost, delay cost, communication cost, etc. Similarly, data is used as a generic term. It can represent files of several Megabytes size, chunks of streaming media packets or contents of web pages.
\cem{A key differentiating
feature of our proposed approach is the focus on how the context information of the captured data can be utilized to maximize the
classification performance of a distributed data mining system.}
\rev{We consider cooperative learners which classify other's data when requested, but instead of maximizing the system utility function, a learner's goal is to maximize its individual utility. However, it can be shown that when the classification costs capture the cost to the learner which is cooperating with another learner to classify its data, maximizing the individual utility corresponds to maximizing the system utility.}
%\rev{The most important point we focus on is how the context information can be utilized to maximize the reward in a distributed setting.}

\cem{To jointly optimize the performance of the distributed data mining system, we} design a distributed online learning algorithm whose long-term average reward converges to the best distributed solution which can be obtained for the classification problem given complete knowledge of online data characteristics as well as their
classification function accuracies and costs when applied to this data.
We define the regret as the difference between the expected total reward of the best distributed classification scheme given complete knowledge about classification function accuracies and the expected total reward of the algorithm used by each learner. 
We prove a sublinear upper bound on the regret,
%, i.e., $O(T^{(2\alpha+d)/(3\alpha+d)})$ for some $\alpha>0$ $d>0$
which implies that the average reward converges to the optimal average reward. The upper bound on regret gives a lower bound on convergence rate to the optimal average reward.
%(i.e., how fast the algorithm converges to the optimal average reward) which is $O(T^{-\alpha/(3\alpha+d)})$.} 
\armv{Specifically, we show that when the contexts of different streams arriving to the same learner are uniformly distributed over the context space,
%\cem{(for example, almost all the networks have bad reputation for malicious activity)} 
the regret 
depends on the dimension of the context space, while when the context from different streams arriving to the same learner is concentrated in a small region of the context space, 
%\cem{(for example, only a few networks have bad reputation for malicious activity)} 
the regret is independent of the dimension of the context space.
} 
\rmv{We also give bounds on the computational complexity and memory requirements which are necessary to implement these algorithms, and show that they scale sublinearly with time. On the other hand, the optimal solution which could by derived by an ``oracle'' having
complete knowledge about classification functions of other learners and their accuracies may have required unbounded memory.}
\rmv{Summarizing, the proposed classification algorithms can be implemented in a distributed way, and have lower computational complexity than the exact solution.} 
\comment{Besides the theoretical results, we show that our distributed contextual learning framework can be used to deal with {\em concept drift} \cite{minku2010impact}, which occurs when the distribution of problem instances changes over time. 
Big data applications are often characterized by concept drift,
in which trending topics change rapidly over time.}
To illustrate our approach, we provide numerical results by applying our learning algorithm to the classification of network security data and compare the results with
existing state-of-the-art solutions.
\rmv{For example, a network security application needs to analyze several Gigabytes of data generated by different locations and/or at different time in order to detect malicious network behavior (see e.g., \cite{ishibashi2005detecting}). The context in this case can be the time of the day (since the network traffic depends on the time of the day) or it can be the IP address of the machine that sent the data (some locations may be associated with higher malicious activity rate) or context can be two dimensional capturing both the time and the location. 
%
%For instance, assume that the context is the time of the day and there are two learners, one which scans the entire data set and incurs a cost $d_1$ and the other scans random fragments of the data set and incurs a cost $d_2$ such that $d_1 >> d_2$. 
%
In our model since the classification accuracies are not known a priori, the network security application needs to learn which one to select based on the context information available about the network data. We note that our online learning framework does not require any prior knowledge about the network traffic characteristics or network topology but the security application learns the best actions from its past observations and decisions. In another example, context can be the information about a priori probability about the origin of the data that is send to the network manager by routers in different locations.
}

The remainder of the paper is organized as follows. In Section \ref{sec:related} we describe the related work and highlight the differences from our work. In Section \ref{sec:probform} we describe the decentralized data classification problem, the optimal distributed classification scheme given the complete system model, its computational complexity, and the regret of a learning algorithm with respect to the optimal classification scheme. 
Then, we consider the model with unknown system statistics and propose a distributed online learning algorithm with {\em uniform contextual partitioning} in Section \ref{sec:iid}. 
\armv{In Section \ref{sec:zooming}, we develop another learning algorithm with {\em adaptive distributed contextual zooming} whose regret can be much better than the previous algorithm depending on the context arrival process.
Several extensions to our proposed learning algorithms are given 
%
%including ensemble learning, comparison with other distributed data mining techniques and centralized contextual bandits 
%
in Section \ref{sec:discuss}.}
Using a network security application we provide numerical results on the performance of our distributed online learning algorithm in Section \ref{sec:numerical}. Finally, the concluding remarks are given in Section \ref{sec:conc}.
\add{\vspace{-0.2in}}
\section{Related Work} \label{sec:related}

%Related work can be categorized into two. \cem{Online learning for data mining} and multi-armed
%bandit methods aimed at learning how to act.
Related work can be divided into two categories: Online learning
for data mining and multi-armed bandit methods aimed at
learning how to act.

Online learning in distributed data classification systems aims to address the informational decentralization, communication costs and privacy issues arising in these systems. 
\armv{Specifically, in online ensemble learning techniques, the predictions of decentralized and heterogeneous classifiers are combined to improve the classification accuracy.
In these systems, each classifier learns at different rates because either each learner observes the entire feature space but has access to a subset of instances of the entire data set, which is called {\em horizontally distributed} data, or each learner has access to only a subset of the features but the instances can come from the entire data set, which is called {\em vertically distributed} data.  
}
For example in \cite{predd2006distributed, perez2010robust, breiman1996bagging, wolpert1992stacked}, various solutions are proposed for distributed data mining problems of horizontally distributed data, while in \cite{zheng2011attribute, yubig2013} ensemble learning techniques are developed that exploit the correlation between the local learners for vertically distributed data. Several cooperative distributed data mining techniques are proposed in \cite{mateos2010distributed, chen2004channel, kargupta1999collective, yubig2013}, where the goal is to improve the prediction accuracy with costly communication between local predictors. In this paper, we take a different approach: instead of focusing on the characteristics of a specific data stream, we focus on the characteristics of data streams with the same context information.
\cem{This novel approach allows us to deal with both horizontally and vertically distributed data in a unified manner within a distributed data mining system.}
%
%To the best of our knowledge, utilizing context information to improve classification accuracy has not been considered before in a distributed data mining system. 
%
Although our framework and illustrative results are depicted using horizontally distributed data, if context is changed to be the set of relevant features, then our framework and results can operate on vertically distributed data.
Moreover, we assume no prior knowledge of the data and context arrival processes and classification function accuracies, and the learning is done in a non-Bayesian way.
\armv{
Learning in a non-Bayesian way is appropriate in decentralized system since learners often do not have correct beliefs about the distributed system dynamics.}
%However, our current framework focus on learning the best classifier but not the best ensemble of classifiers. But our framework can be extended to the ensemble learning framework where both the weights and classifier accuracies are learned over time.
%
%Although we do not consider vertically distributed data in this paper, our results can be extended to the vertically distributed data using ensemble techniques. 
%

\rev{Most of the prior work in distributed data mining provides algorithms which are asymptotically converging to an optimal or locally-optimal solution without providing any rates of convergence.}
On the contrary, we do not only prove convergence results, but we are also able to explicitly characterize the performance loss incurred at each time step with respect to the optimal solution. In other words, we prove regret bounds that hold uniformly over time. Some of the existing solutions (including \cite{sewell2008ensemble, alpaydin2004introduction, mcconnell2004building, breiman1996bagging, wolpert1992stacked, buhlmann2003boosting, lazarevic2001distributed, perlich2011cross}) propose ensemble learning techniques including bagging, boosting, stacked generalization and cascading, where the goal is to use classification results from several classifiers to increase the prediction accuracy. 
In our work we only consider choosing the best classification function (initially unknown) from a set of classification functions that are accessible by decentralized learners. However, our proposed distributed learning method can easily be adapted to perform ensemble learning.
% by using linear regressors and readily available tools from bandit online linear optimization \cite{bartlett2008high, chu2011contextual, gai2012combinatorial}. 
% Another advantage of our work compared to the prior works is that we exploit the context information present in the data to boost classification accuracy. In order to do this using the previous works, one needs to run different instances the proposed algorithms for
%different contexts which may even fail to converge.
We provide a detailed comparison to our work in Table \ref{tab:comparison1}.

\comment{Our contextual framework can also deal with concept drift \cite{minku2010impact}. Formally, a concept is the distribution of the problem, \cem{i.e., the joint distribution of the input data stream, true labels and context information,}
at a certain point of time \cite{narasimhamurthy2007framework}. Concept drift is a change in this distribution \cite{gama2004learning, gao2007appropriate}. By treating time as the context, the same regret bounds of our learning algorithms hold under concept drift, without requiring any drift detection mechanism as required in some of the existing solutions employed for dealing with concept drift \cite{baena2006early, minku2012ddd}. Hence, our proposed framework can be used to formalize and solve the problem of concept drift
which was mainly dealt with previously in an ad-hoc manner.}
%To the best of our knowledge, all the prior algorithms to capture concept drift are built on an ad-hoc basis without any theoretical performance bounds. 

Other than distributed data mining, our learning framework can be applied to any problem that can be formulated as a decentralized contextual bandit problem. Contextual bandits have been studied before in \cite{slivkins2009contextual, dudik2011efficient, langford2007epoch, chu2011contextual} in a single agent setting, where the agent sequentially chooses from a set of alternatives with unknown rewards, and the rewards depend on the context information provided to the agent at each time step. To the best of our knowledge, our work is the first to address the decentralized contextual bandit problem in a system of cooperative learning agents. 
In \cite{li2010contextual}, a contextual bandit algorithm named LinUCB is proposed for recommending personalized news articles,
which is variant of the UCB algorithm \cite{auer} designed for linear payoffs.
Numerical results on real-world Internet data are provided, but no
theoretical results on the resulting regret are derived.
A perceptron based algorithm is used with upper confidence bounds in \cite{crammer2011multiclass} in a centralized single user setting that achieves sublinear regret when the instances are chosen by an adversary and the learning algorithm receives binary feedback about the true label instead of the true label itself. Previously, distributed multi-user learning is only considered in the standard finite armed bandit problem. In \cite{anandkumar, hliu1} distributed online learning algorithms that converge to the optimal allocation with logarithmic regret are proposed, given that the optimal allocation is an orthogonal allocation in which each user selects a different action. This is generalized in \cite{tekin2012sequencing} to dynamic resource sharing problems and logarithmic regret results are also proved for this case. 
Alternatively, in this paper, we consider distributed online learning in a contextual bandit setting. We provide a detailed comparison between our work and related work in multi-armed bandit learning in Table \ref{tab:comparison2}. Our decentralized contextual learning framework can be seen as an important extension of the centralized contextual bandits framework \cite{slivkins2009contextual}. The main difference is that: (i) a three phase learning algorithm with {\em training}, {\em exploration} and {\em exploitation} phases are needed instead of the standard two phase, i.e., {\em exploration} and {\em exploitation} phases, algorithms used in centralized contextual bandit problems; (ii) the adaptive partitions of the context space should be formed in a way that each learner can efficiently utilize what is learned by other learners about the same context. In the distributed contextual framework, the training phase is necessary since the context arrivals to learners are different which makes the learning rates of the learners for different context different.
\armv{Essentially, the training phase balances the learning rates of the learners.}

\begin{table}[t]
\centering
{\renewcommand{\arraystretch}{0.6}
{\fontsize{8}{6}\selectfont
\setlength{\tabcolsep}{.1em}
\begin{tabular}{|l|c|c|c|c|c|}
\hline
&  \cite{breiman1996bagging, buhlmann2003boosting, lazarevic2001distributed, chen2004channel, perlich2011cross} & \cite{mateos2010distributed, kargupta1999collective} &  \cite{zheng2011attribute} & This work \\
\hline
Aggregation & non-cooperative & cooperative & cooperative & \rev{no} \\
\hline
Message  & none & data & training  & data and label \\
exchange & & & residual & only if improves  \\
& & & &   performance \\
\hline
Learning  & offline/online & offline & offline & Non-bayesian \\
approach&&&& online\\
\hline
Correlation & N/A & no & no & yes\\
exploitation & & & &\\
\hline
Information from  & no & all & all & only if improves  \\
other learners & & & &  accuracy \\
\hline
Data partition & horizontal & horizontal & vertical & horizontal \\
\hline
Bound on regret,  & no &no &no &yes - sublinear\\
convergence rate &&&&\\
\hline
\end{tabular}
}
}
\caption{Comparison with related work in distributed data mining}
\label{tab:comparison1}
\add{\vspace{-0.1in}}
\end{table}

\begin{table}[t]
\centering
{\fontsize{8}{6}\selectfont
\setlength{\tabcolsep}{.25em}
\vspace{-0.2in}
\begin{tabular}{|l|c|c|c|c|c|}
\hline
&\cite{slivkins2009contextual, dudik2011efficient, langford2007epoch, chu2011contextual} &  \cite{hliu1, anandkumar, tekin2012sequencing} & \cite{tekin4} & This work \\
\hline
Multi-user & no & yes & yes & yes \\
\hline
Cooperative & N/A & yes & no & yes \\
\hline
Contextual & yes & no & no & yes \\
\hline
Data arrival  & arbitrary & i.i.d. or Markovian & i.i.d. & i.i.d or arbitrary \\
process& & & & \\
\hline
Regret & sublinear & logarithmic & may be linear & sublinear \\
\hline
\end{tabular}
}
\caption{Comparison with related work in multi-armed bandits}
\vspace{-0.35in}
\label{tab:comparison2}
\end{table}

\add{\vspace{-0.2in}}
\section{Problem Formulation}\label{sec:probform}

The system model is shown in Figure \ref{fig:system}. There are $M$ learners which are indexed by the set ${\cal M} = \{1,2,\ldots,M\}$.
Let ${\cal M}_{-i} = {\cal M} - \{i\}$.
\cem{These learners work in a discrete time setting $t=1,2,\ldots,T$, where the following events happen sequentially, in each time slot: (i) a data stream $s_i(t)$ with a specific context $x_i(t)$ arrives to each learner $i \in {\cal M}$, (ii) each learner chooses one of its own classification functions or another learner to send its data and context, and produces a label based on the prediction of its own classification function or the learner to which its sent its data and context, (iii) the truth (true label) is revealed eventually, perhaps by events or by a supervisor, only to the classifier where the data arrived.
}
% 
%The steps that occur in a time slot is illustrated in Figure \ref{fig:events}.
%\rev{We would like to note that the assumption that a single data arrives at each time step and considering only finite number of time steps up to $T$ is not restrictive. Equivalently, we can assume that the data arrival process to each classifier is different and let $J_i(t)$ denote the number of arrivals to classifier $i$ by time $t$ which is a random variable. Assuming we know an upper bound on $J_i(T)$, the performance bounds for the learning algorithms we propose will be a function of that upper bound instead of $T$. By using {\em the doubling-trick} \cite{cesa1997use}, we can get performance bounds that does not depend on $T$ and the upper bound we discussed, and only slightly worse than the ones that depend on $T$.} 
%

%\begin{figure}
%\begin{center}
%\includegraphics[width=0.8\columnwidth]{}
%\caption{operation of the distributed data classification system from the viewpoint of classifier 1} 
%\label{fig:events}
%\end{center}
%\end{figure}
%
Each learner $i \in {\cal M}$ has access to a set of classification functions ${\cal F}_i$ which it can invoke to classify the data. 
%\rev{Each classification function in ${\cal F}_i$ can be treated as a trained estimator.} 
Classifier $i$ knows the functions in ${\cal F}_i$ and costs of calling them\footnote{Alternatively, we can assume that the costs are random variables with bounded support whose distribution is unknown. In this case, the learners will not learn the accuracy but they will learn accuracy minus cost.}, but not their accuracies, while it knows the set of other learners ${\cal M}_{-i}$ and costs of calling them but does not know the functions ${\cal F}_k$, $k \in {\cal M}_{-i}$, but only knows an upper bound on the number of classification functions that each learner has, i.e., $F_{\max}$ on $|{\cal F}_k|$\footnote{For a set $A$, let $|A|$ denote the cardinality of that set.}, $k \in {\cal M}_{-i}$.
%
%The classification functions of different classifiers can overlap, i.e., ${\cal F}_i \cap {\cal F}_j \neq \emptyset$ or they may be exclusive ${\cal F}_i \cap {\cal F}_j = \emptyset$. 
%The goal of a classifier is to maximize the number of instances that are classified correctly by collaborating with other classifiers.
Classifier $i$ can either invoke one of its classification functions or forward the data to another learner to have it labeled. We assume that for learner $i$ calling each classification function $k \in {\cal F}_i$ incurs a cost $d_k$.
For example, if the application is delay critical this can be the delay cost, or this can represent the computational cost and power consumption associated with calling a classification function.
Since the costs are bounded, without loss of generality we assume that costs are normalized, i.e., $d_k \in [0,1]$.
We assume that a learner can only call a single function for each input data in order to label it. This is a reasonable assumption when the application is delay sensitive since calling more than one classification function increases the delay. 
\rmv{However, our framework can also be easily extended to ensemble learning where results of different classification function are combined to increase the prediction accuracy. We discuss this possible extension in Section \ref{sec:conc}.}
A learner $i$ can also send its input to another learner in ${\cal M}_{-i}$ in order to have it labeled. Because of the communication cost and the delay caused by processing at the recipient, we assume that whenever a data stream is sent to another learner $k \in {\cal M}_{-i}$ a cost of $d_k$ is incurred. \rev{The learners are cooperative which implies that learner $k \in {\cal M}_{-i}$ will return a label to $i$ when called by $i$. Similarly, when called by $k \in {\cal M}_{-i}$, learner $i$ will return a label to $k$. We do not consider the effect of this on $i$'s learning rate, however, since our results hold for the case when other learners are not helping $i$ to learn about its own classification functions, they will hold when other learners help $i$ to learn about its own classification functions. 
If we assume that $d_k$ also captures the cost to learner $k$ to classify and sent the label to learner $i$, then maximizing $i$'s own utility corresponds to maximizing the system utility.} 
\rev{
%Upon a context arrival, the set of own classification functions and other classifiers constitutes the actions available to classifier $i$. We name each action an {\em arm}, thus let
%
Let ${\cal K}_i = {\cal F}_i \cup {\cal M}_{-i}$. We call ${\cal K}_i$ the set of arms (alternatives).
% Let $l_i := |{\cal F}_i|$.
%Classifiers in ${\cal M}_{-i}$ are indexed by $\{1,2,\ldots,M-1\}$, while classification functions in ${\cal F}_i$ are indexed by the set $\{M,M+1,\ldots,M+l_i-1\}$.
}

\begin{figure}
\begin{center}
\includegraphics[width=0.9\columnwidth]{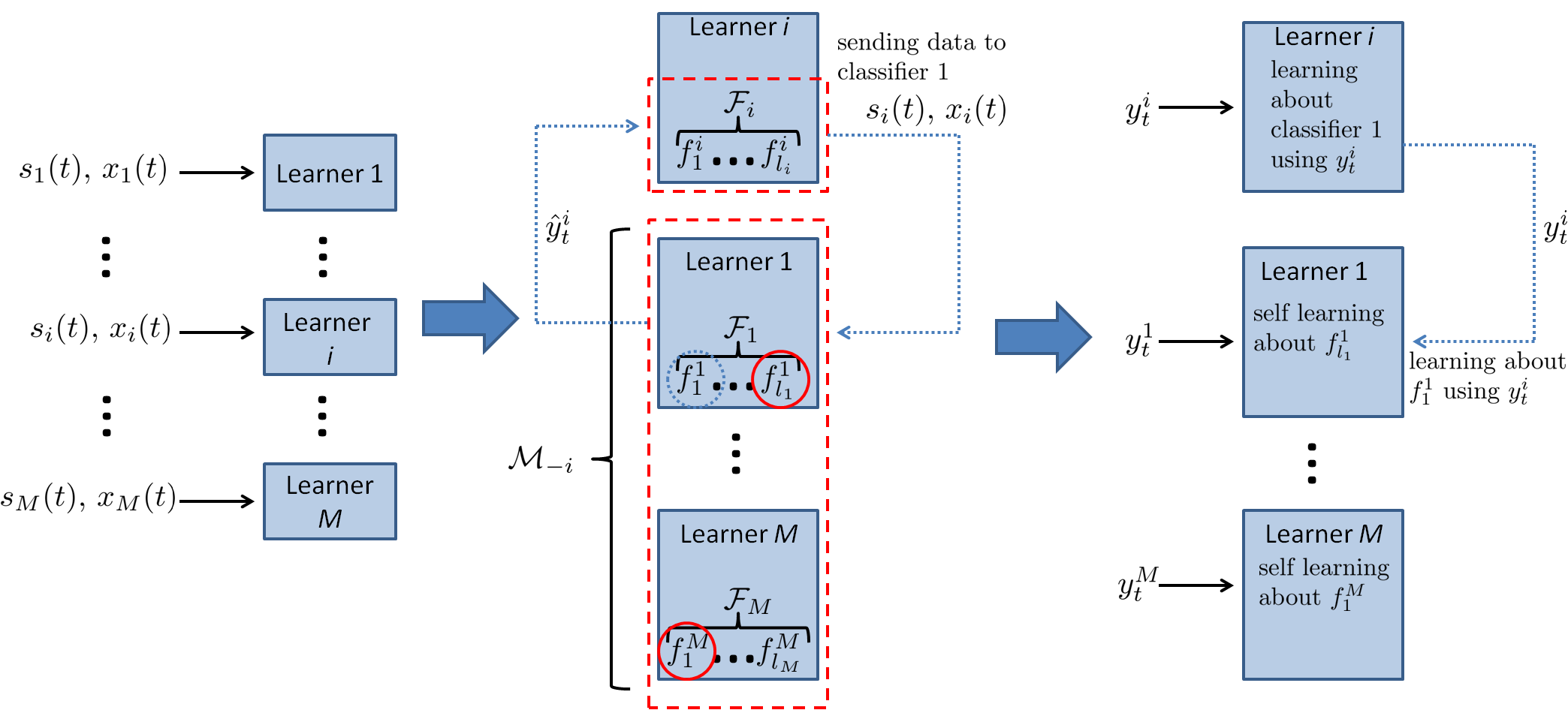}
\vspace{-0.1in}
\caption{Operation of the distributed data classification system from the viewpoint of learner 1} 
\vspace{-0.4in}
\label{fig:system}
\end{center}
\end{figure}

We assume that each \rev{classification function} \com{Give a concrete example of this function} produces a binary label\footnote{In general we can assume that labels belong to $\mathbb{R}$ and define the classification error as the mean squared error or some other metric. Our results can be adapted to this case as well.}. 
%
%\cem{For example, in network security, the label can be $1$ when an intrusion is detected and $0$ when there is no intrusion. 
%
\armv{Considering only binary classifiers is not restrictive since in general, ensembles of binary classifiers can be used to accomplish more complex classification tasks \cite{lienhart2003detector, mao2005multiclass}.}
The data stream at time $t$ arrives to learner $i$ with context information $x_i(t)$. The context may be generated as a result of pre-classification or a header of the data stream.
%
% or a specific query. \rev{Obviously, classification based on context information performs at least as good as classification that do not utilize context information.}
%In this paper we show that classification with context information performs at least as good as classification without context information and significantly outperforms classification without context information when the informativeness of the context is high.
For simplicity we assume that the context space is ${\cal X} = [0,1]^d$, while our results will hold for any bounded $d$ dimensional context space. We also note that the data input is \cem{high} dimensional and its dimension is greater than $d$ (in most of the cases its much larger than $d$) \com{Give example of the dimension of the data in a network security data}. \cem{For example, the network security data we use in numerical results section has 42 features, while the dimension of the context we use is at most 1.}
\armv{In such a setting, exploiting the context information may significantly improve the classification accuracy while decreasing the classification cost. However, the rate of learning increases with the dimension of the context space, which results in a tradeoff between the rate of learning and the classification accuracy.
Exploiting the context information not only improves the classification accuracy but it can also decrease the classification cost since the context can also provide information about what features to extract from the data. 
}
\rmv{\cem{For example, in a network security application, the context can be the reputation of the network in which the data originates.
Based on the reputation, a learner may monitor the data stream originating from the network periodically, and this period can decrease with the reputation. This implies that the monitoring cost increases when the reputation of the network decreases.
}
}

Each classification function $k \in \cup_{i \in {\cal M}}{\cal F}_i$ has an unknown accuracy $\pi_k(x) \in [0,1]$, depending on the context $x$. The accuracy $\pi_k(x)$ represents the probability that 
an input stream with context $x$ will be labeled correctly when classification function $k$ is used to label it. \rev{Different classification functions can have different accuracies for the same context. Although we do not make any assumptions about the classification accuracy $\pi_k(x)$ and the classification cost $d_k$, in general one can assume that classification accuracy increases with classification cost (e.g., classification functions with higher resolution, better processing).
\armv{In this paper the cost $d_k$ is a generic term that can represent any known cost such as processing cost, delay cost, communication cost, etc. }
%Similarly, data is used as a generic term. It can represent files of several Megabytes size, chunks of streaming media packets or contents of web pages.
%While our methodology is general and can be applied to numerous applications and data, we will illustrate our approach in this paper using a network security  application.
} 
\rev{We assume that each classifier has similar accuracies for similar contexts; we formalize this in terms of a (uniform) Lipschitz condition.}
\begin{assumption} \label{ass:lipschitz2}
For each $k \in \cup_{i \in {\cal M}} {\cal F}_i$, there exists $L>0$, $\alpha>0$ such that for all $x,x' \in {\cal X}$, we have
$|\pi_k(x) - \pi_k(x')| \leq L ||x-x'||^\alpha$,
where $||.||$ denotes the Euclidian norm in $\mathbb{R}^d$.
\end{assumption}

Assumption \ref{ass:lipschitz2} indicates that the accuracy of a classification function for similar contexts will be similar to each other. \rev{Even though the Lipschitz condition can hold with different constants $L_k$ and $\alpha_k$ for each classification function, taking $L$ to be the largest among $L_k$ and $\alpha$ to be the smallest among $\alpha_k$ we get the condition in Assumption \ref{ass:lipschitz2}.}
For example, the context can be the time of the day or/and the location from which the data originates.
%Some networks may have a bad reputation and malicious activity often occurs at specific times of the day.
Therefore, the relation between the classification accuracy and time can be written down as a Lipschitz condition. We assume that $\alpha$ is known by the learners. In the bandit setting this is referred to as {\em similarity information} \cite{slivkins2009contextual}, \cite{ortner2010exploiting}.
\armv{
Different from these works, \cem{we do not require $L$ to be known by the learners. Prior work cited above required techniques to estimate $L$ in order for their algorithms to work, which usually causes poor short time performance and order of regret that is slightly higher than the optimal order with known $L$. In our work $L$ appears only in the performance bounds we prove.}
}

The goal of learner $i$ is to explore the alternatives in ${\cal K}_i$ to learn the accuracies, while at the same time exploiting the best alternative for the context $x_t$ arriving at each time step $t$ that balances the accuracy and cost to minimize its long term loss due to uncertainty. Learner $i$'s problem can be modeled as a contextual bandit problem \cite{slivkins2009contextual, dudik2011efficient, langford2007epoch, chu2011contextual}. After labeling the input at time $t$, each learner observes the true label and updates the sample mean accuracy of the selected arm based on this. 
Accuracies translate into rewards in bandit problems.
In the next subsection we formally define the benchmark solution which is computed using perfect knowledge about classification accuracies. Then, we define the regret which is the performance loss due to uncertainty about classification accuracies.

\add{\vspace{-0.2in}}
\subsection{Optimal Classification with Complete Information} \label{sec:centralized}

Our benchmark when evaluating the performance of the learning algorithm is the optimal solution which selects the classification function $k$ with the highest accuracy minus cost for learner $i$ from the set $\cup_{j \in {\cal M}} {\cal F}_j$ given context $x_t$ at time $t$. We assume that the costs are normalized so the tradeoff between accuracy and cost is captured without using weights. Specifically, the optimal solution we compare against is given by
\add{\vspace{-0.2in}}
\begin{align}
k^*(x) = \argmax_{k \in {\cal K}_i} \pi_k(x) - d_k, ~~ \forall x \in {\cal X}. \label{eqn:opt2}
\end{align}
%

%
%distributed solution which is computed offline given the complete knowledge about the data arrival process and classifier accuracies. 
%For simplicity, we focus on the distributed classification problem of classifier $i$ 
%In the optimal classification scheme, for each context $x \in {\cal X}$ classifier $i$ wants to find the optimal classification function from the set $\cup_{i \in {\cal M}} {\cal F}_i$, i.e., the classification function $k$ with the highest accuracy minus the cost of classification $d_k$.
%
%
%\com{In addition to Assumption \ref{}, we can prove tighter regret bounds if we assume more structure on the optimal solution. Specifically, the following assumption states that for almost all the points in the context space, there is a  gap between the accuracies of optimal and suboptimal classifiers. (Since the diameters of the cubes decrease with the number of slices, when we increase $m_T$, the suboptimality gap for the cubes adjacent to the boundary cubes decreases since they will become closer to the boundary. Therefore even if we assume that the optimal scheme partitions ${\cal X}$ into  $|\cup_{i \in {\cal M}} {\cal F}_i|$ sets in each of which a single classification function is optimal, the volume of cubes for which the suboptimality is less than $\epsilon>0$ remains constant. Therefore, if we do not increase the rate of explorations regret due to suboptimalities will be linear in time).}
%
% 
%Note that partitioning $[0,1]^d$ is not enough because the noise can carry a data point outside of $[0,1]^d$.
%
Knowing the optimal solution means that learner $i$ knows the classification function in $\cup_{i \in {\cal M}} {\cal F}_i$ that yields the \rev{highest expected accuracy} for each $x \in {\cal X}$. Choosing the best classification function for each context $x$ requires to evaluate the accuracy minus cost for each context and is computationally intractable, because the context space ${\cal X}$ has infinitely many elements. 
\rmv{
Note that the problem remains hard even if we would put additional structure on the optimal classification scheme. For instance, we could assume that the 
%
%\begin{assumption}\label{prob:opt}
optimal classification scheme for learner $i$ partitions ${\cal X}$ into  $|\cup_{i \in {\cal M}} {\cal F}_i|$ sets in each of which a single classification function is optimal.
%\end{assumption}
%
Assume for an instance that the data arrival process to learner $i$ is i.i.d. with density $q_i$ and learner $i$ has access to all classification functions $\cup_{i \in {\cal M}} {\cal F}_i$. If learner $i$ knows $q_i$ and the classification accuracies of all classification functions in $\cup_{i \in {\cal M}} {\cal F}_i$, then 
under the above assumption, learner $i$ could compute the optimal classification regions
\begin{align*}
{\cal R}^* = \{R^*_k\}_{(k \in \cup_{i \in {\cal M}} {\cal F}_i)},
\end{align*}
of ${\cal X}$, by solving
\begin{align}
\textrm{{\bf (P1)  }} {\cal R}^* = \argmin_{{\cal R} \in \Theta} E \left[ \left| Y - \sum_{k \in \cup_{i \in {\cal M}} {\cal F}_i} k(X) I(X \in R_l) \right| + \sum_{k \in {\cal F}_i} d_k I(X \in R_k) + \sum_{k \in {\cal M}_{-i}}  d_k I \left(X \in \cup_{j \in {\cal F}_k} R_j \right)   \right], \label{eqn:opt}
\end{align}
where $\Theta$ is the set of $|\cup_{i \in {\cal M}} {\cal F}_i|$-set partitions of ${\cal X}$, the expectation is taken with respect to the distribution $q_i$ and $Y$ is the random variable denoting the true label. Here $I(X \in R_l)$ denotes the event that data received by the learner belongs to the $l$-th set of the partition ${\cal R}$ of ${\cal X}$.
The complexity of finding the optimal classification regions increases exponentially with $|\cup_{i \in {\cal M}} {\cal F}_i|$.
Importantly, note that the learning problem we are trying to solve is even harder than this because the learners are distributed and thus, each learner cannot directly access to all classification functions (learner $i$ only knows set ${\cal F}_i$ and ${\cal M}$, but it does not know any ${\cal F}_j, j \in {\cal M}_{-i}$), and the distributions $q_i$, $i \in {\cal M}$ are unknown (they need not to be i.i.d. or Markovian). Therefore, we use online learning techniques that do not rely on solving the optimization problem in (\ref{eqn:opt}) nor on an estimated version of it. 

\cem{Note that we discussed the above assumption only to illustrate that the optimal solution is still computationally
hard even when we put extra structure on the problem. Our results  
only require Assumption \ref{ass:lipschitz2} to hold.}
}
%\rev{We give Assumption \ref{prob:opt} only to illustrate that the optimal solution is still computationally hard when we put extra structure on the problem.}
 
%In particular, we will compare against the best solution given by
%
%
\add{\vspace{-0.2in}}
\subsection{The Regret of Learning}

In this subsection we define the regret as a performance measure of the learning algorithm used by the learners. Simply, the regret is the loss incurred due to the unknown system dynamics. Regret of a learning algorithm for learner $i$ is defined with respect to the best arm $k^*(x)$ given in (\ref{eqn:opt2}).
The regret of a learning algorithm is given by
\add{\vspace{-0.05in}}
\begin{align*}
%\textrm{({\bf P2})  }
R(T) &:= \sum_{t=1}^T \pi_{k^*(x_t)}(x_t) 
- E \left[ \sum_{t=1}^T ( I(k(x_t) = y_t) - d_{k(x_t)}) \right] ,
\end{align*}
where $k(x_t)$ denotes the classification function or other learner chosen at time $t$, $y_t$ denotes the true label and the expectation is taken with respect to the random selection made by the learning algorithm. Regret gives the convergence rate of the total expected reward of the learning algorithm to the value of the optimal solution given in (\ref{eqn:opt2}). Any algorithm whose regret is sublinear, i.e., $R(T) = O(T^\gamma)$ such that $\gamma<1$, will converge to the optimal solution in terms of the average reward. 
In the following section we will propose a distributed learning algorithm with sublinear regret.

\armv{In the next section, we propose a learning algorithm which divides the context space $[0,1]^d$ to $(m_T)^d$ hypercubes, and estimates the best classification function or best learner to call, in each of these hypercubes. Here, the number $m_T$ depends on the time horizon $T$ and is nondecreasing in $T$ which means that the number of hypercubes we consider is nondecreasing in $T$. The longer the time horizon, the finer should the partitions be in order to control the suboptimality resulting from taking averages over the entire hypercube. Secondly, we propose a distributed zooming algorithm that adaptively adjusts the number of hypercubes by zooming into the regions of the context space with high context arrival intensity.}

\vspace{-0.15in}
\section{A distributed uniform context partitioning algorithm} \label{sec:iid}

In this section we consider a uniform partitioning algorithm.
Assume that each learner runs the learning algorithm {\em Classify or Send for Classification} (CoS) given in Figure \ref{fig:COS}. Let $m_T$ be the {\em slicing parameter} of CoS that determines the partition of the context space ${\cal X}$. Basically, choosing a large $m_T$ will improve classification accuracy while increasing the number of explorations. We will analyze the performance for a fixed $m_T$ and then optimize over it. 
CoS forms a partition of $[0,1]^d$ consisting of $(m_T)^d$ sets where each set is a $d$-dimensional hypercube with dimensions $1/m_T \times 1/m_T \times \ldots \times 1/m_T$. Let ${\cal P}_T = \{P_1, P_2, \ldots, P_{(m_T)^d} \}$ denote this partition where each $P_l$ is a hypercube. When clear from the context, we will use $l$ instead of $P_l$ to denote the hypercube.
The set of arms for learner $i$ consists of its classification functions and the set of learners it can send the data to, which is denoted by ${\cal K}_i$.

%Assume that the probability density function for context arrivals to classifier $i$ which is $q_i$ is bounded i.e., $\sup_{x \in {\cal X}} q_i(x) \leq q_{\max}$ for all $i \in {\cal M}$. \rev{This guarantees that for any $\epsilon>0$ and any $x \in {\cal X}$ we can find a ball $B(x,r)$ of radius $r$ , centered at $x$ such that 
%
%\begin{align}
%\int_{B(x,r)} q_i(y) dy \leq \epsilon, \label{eqn:boundeddensity} 
%\end{align}
%
%for all $i \in {\cal M}$.
%}

%For simplicity we assume that the noise distribution $q_{\epsilon}$ has support in $[0,1]^d$, thus $x^i(t)$ always lies in $[-2,2]^d$. In general the algorithm will work for any bounded support for noise density but the algorithm should know an upper bound on the support to perform the slicing.

\begin{figure}[htb]
\add{\vspace{-0.1in}}
\fbox {
\begin{minipage}{0.95\columnwidth}
{\fontsize{8}{7}\selectfont
\flushleft{Classify or Send for Classification (CoS for learner $i$):}
\begin{algorithmic}[1]
\STATE{Input: $D_1(t)$, $D_2(t)$, $D_3(t)$, $T$, $m_T$}
\STATE{Initialize: Partition $[0,1]^d$ into $(m_T)^d$ sets. Let ${\cal P}_T = \{ P_1, \ldots, P_{(m_T)^d} \}$ denote the sets in this partition. $N^i_{k,l}=0, \forall k \in {\cal K}_i, P_l \in {\cal P}_T$, $N^i_{1,k,l}=0, \forall k \in {\cal M}_{-i}, P_l \in {\cal P}_T$.}
\WHILE{$t \geq 1$}
\FOR{$l=1,\ldots,(m_T)^d$}
\IF{$x_i(t) \in P_l$}
\IF{$\exists k \in {\cal F}_i \textrm{ such that } N^i_{k,l} \leq D_1(t)$}
\STATE{Run {\bf Explore}($k$, $N^i_{k,l}$, $\bar{r}_{k,l}$)}
%\STATE{Exploration phase: select arm $k$}
%\STATE{Receive reward $r_k(t) = I(k(x_i(t)) = y_t) - d_{k(x_i(t))}$}
%\STATE{$\bar{r}_{k,l} = \frac{N^i_{k,l} \bar{r}_{k,l} + r_k(t)}{N^i_{k,l} + 1}$}
%\STATE{$N^i_{k,l}++$}
\ELSIF{$\exists k \in {\cal M}_{-i} \textrm{ such that } N^i_{1,k,l} \leq D_2(t)$}
\STATE{Obtain $N^k_l(t)$ from $k$, set $N^i_{1,k,l} = N^k_l(t) - N^i_{k,l}$}
\IF{$N^i_{1,k,l} \leq D_2(t)$}
\STATE{Run {\bf Train}($k$, $N^i_{1,k,l}$)}
\ELSE
\STATE{Go to line 15}
%\STATE{Training phase: select arm $k$}
%\STATE{Receive reward $r_k(t) = I(k(x_i(t)) = y_t) - d_{k(x_i(t))}$}
%\STATE{$N^i_{1,k,l}++$}
\ENDIF
\ELSIF{$\exists k \in {\cal M}_{-i} \textrm{ such that } N^i_{k,l} \leq D_3(t)$}
\STATE{Run {\bf Explore}($k$, $N^i_{k,l}$, $\bar{r}_{k,l}$)}
%\STATE{Exploration phase: select arm $k$}
%\STATE{Receive reward $r_k(t) = I(k(x_i(t)) = y_t) - d_{k(x_i(t))}$}
%\STATE{$\bar{r}_{k,l} = \frac{N^i_{k,l} \bar{r}_{k,l} + r_k(t)}{N^i_{k,l} + 1}$}
%\STATE{$N^i_{k,l}++$}
\ELSE
\STATE{Run {\bf Exploit}($\boldsymbol{M}^i_l$, $\bar{\boldsymbol{r}}_l$, ${\cal K}_i$)}
%\STATE{Exploitation phase: select arm $k\in \argmax_{j \in {\cal K}_i} \bar{r}^i_{j,l}$}
%\STATE{Receive reward $r_k(t) = I(k(x_i(t)) = y_t) - d_{k(x_i(t))}$}
%\STATE{$\bar{r}_{k,l} = \frac{N^i_{k,l} \bar{r}_{k,l} + r_k(t)}{N^i_{k,l} + 1}$}
%\STATE{$N^i_{k,l}++$}
\ENDIF
\ENDIF
\ENDFOR
\STATE{$t=t+1$}
\ENDWHILE
\end{algorithmic}
}
\end{minipage}
} \caption{Pseudocode for the CoS algorithm} \label{fig:COS}
\add{\vspace{-0.22in}}
\end{figure}
\begin{figure}[htb]
\fbox {
\begin{minipage}{0.95\columnwidth}
{\fontsize{8}{7}\selectfont
{\bf Train}($k$, $n$):
\begin{algorithmic}[1]
\STATE{select arm $k$}
\STATE{Receive reward $r_k(t) = I(k(x_i(t)) = y_t) - d_{k(x_i(t))}$}
\STATE{$n++$}
\end{algorithmic}
{\bf Explore}($k$, $n$, $r$):
\begin{algorithmic}[1]
\STATE{select arm $k$}
\STATE{Receive reward $r_k(t) = I(k(x_i(t)) = y_t) - d_{k(x_i(t))}$}
\STATE{$r = \frac{n r + r_k(t)}{n + 1}$}
\STATE{$n++$}
\end{algorithmic}
{\bf Exploit}($\boldsymbol{n}$, $\boldsymbol{r}$, ${\cal K}_i$):
\begin{algorithmic}[1]
\STATE{select arm $k\in \argmax_{j \in {\cal K}_i} r_j$}
\STATE{Receive reward $r_k(t) = I(k(x_i(t)) = y_t) - d_{k(x_i(t))}$}
\STATE{$\bar{r}_{k} = \frac{n_k \bar{r}_{k} + r_k(t)}{n_k + 1}$}
\STATE{$n_k++$}
\end{algorithmic}
}
\end{minipage}
} \caption{Pseudocode of the training, exploration and exploitation modules} \label{fig:mtrain}
\add{\vspace{-0.2in}}
\end{figure}

\comment{
\begin{figure}[htb]
\fbox {
\begin{minipage}{0.95\columnwidth}
{\fontsize{9}{9}\selectfont
{\bf Explore}($k$, $n$, $r$):
\begin{algorithmic}[1]
\STATE{select arm $k$}
\STATE{Receive reward $r_k(t) = I(k(x_i(t)) = y_t) - d_{k(x_i(t))}$}
\STATE{$r = \frac{n r + r_k(t)}{n + 1}$}
\STATE{$n++$}
\end{algorithmic}
}
\end{minipage}
} \caption{Pseudocode of the exploration module} \label{fig:mexplore}
\end{figure}
}

\comment{
\begin{figure}[htb]
\fbox {
\begin{minipage}{0.9\columnwidth}
{\fontsize{9}{9}\selectfont
{\bf Exploit}($\boldsymbol{n}$, $\boldsymbol{r}$, ${\cal K}_i$):
\begin{algorithmic}[1]
\STATE{select arm $k\in \argmax_{j \in {\cal K}_i} r_j$}
\STATE{Receive reward $r_k(t) = I(k(x_i(t)) = y_t) - d_{k(x_i(t))}$}
\STATE{$\bar{r}_{k} = \frac{n_k \bar{r}_{k} + r_k(t)}{n_k + 1}$}
\STATE{$n_k++$}
\end{algorithmic}
}
\end{minipage}
} \caption{Pseudocode of the exploitation module} \label{fig:mexploit}
\end{figure}
}
For each set in the partition ${\cal P}_T$, learner $i$ keeps several counters for each arm in ${\cal K}_i$. Any time step $t$ can be in one of the three phases: {\em training} phase in which learner $i$ trains another learner by sending its own data, {\em exploration} phase in which learner $i$ updates the estimated reward of an arm in ${\cal K}_i$ by selecting it, and {\em exploitation} phase in which learner $i$ selects the arm with the highest estimated reward. The pseudocodes of these phases are given in Figure \ref{fig:mtrain}.
Upon each data arrival, learner $i$ first checks to which set in the partition ${\cal P}_T$ the context belongs. 
Let $N^i_l(t)$ be the number of data arrivals in $P_l$ of learner $i$ by time $t$. 
For $k \in {\cal F}_i$, let $N^i_{k,l}(t)$ be the number of times arm $k$ is selected in response to a data arriving to set $P_l$ in the partition ${\cal P}_T$ by learner $i$ by time $t$.
Note that learner $i$ does not know anything about learner $k$'s classification functions.
Therefore, before forming estimates about the reward of $k$, it needs to make sure that $k$ will almost always select its optimal classification function when called by $i$. This is why the training phase is needed for learners $k \in {\cal M}_{-i}$. To separate training, exploration and exploitation phases, learner $i$ keeps two counters for $k \in {\cal M}_{-i}$.
The first one, i.e., $N^i_{1,k,l}(t)$, counts the number of data arrivals to learner $k$ in set $l$ by time $t$ which includes data arrivals with context $x_k(t') \in P_l$, $t' <t$ and data arrivals from learner $i$ to $k$ in the training phases of $i$.
The second one, i.e., $N^i_{2,k,l}(t)$, counts the number of data arrivals to learner $k$ that are used in $i$'s reward estimation of $k$. This is the number of times data is sent from learner $i$ to learner $k$ in the exploration phase or exploitation phase of learner $i$. For simplicity of notation we let $N^i_{k,l}(t) := N^i_{2,k,l}(t)$ for $k \in {\cal M}_{-i}$.
The values of these counters are random variables when the context arrival process is stochastic.
Based on the values of these counters at time $t$, learner $i$ either trains, explores or exploits an arm in ${\cal K}_i$. This three phase learning structure is one of the major components of our learning algorithm which makes it different than the algorithms proposed for the contextual bandits in the literature which only have exploration and exploitation phases.

When a context $x_i(t) \in P_l$ arrives, in order to make sure that all classification functions of all learners are explored sufficiently, learner $i$ checks if the following set is nonempty. 
\add{\vspace{-0.25in}}
\begin{align*}
{\cal S}_{i,l} := &\left\{ k \in {\cal F}_i \textrm{ such that } N^i_{k,l}(t) \leq D_1(t)  \textrm{ or } k \in {\cal M}_{-i} \right. \\
&\left. \textrm{ such that } N^i_{1,k,l}(t) \leq D_2(t) \textrm{ or } N^i_{2,k,l}(t) \leq D_3(t)   \right\}.
\end{align*} 
For $k \in {\cal M}_{-i}$, let ${\cal E}^i_{k,l}(t)$ be the set of rewards collected from selections of arm $k$ in set $l$ by time $t$ for which $N^i_{1,k,l}(t) > D_2(t)$. We note that, learner $i$ does not have to communicate with learner $k$ at each time step to update $N^i_{1,k,l}(t)$. It only needs to communicate when $N^i_{1,k,l}(t) \leq D_2(t)$. To obtain the correct value of $N^i_{1,k,l}(t)$ it needs to know $N^k_l(t)$, since $N^i_{1,k,l}(t)=N^k_l(t) - N^i_{2,k,l}(t)$. 
For $k \in {\cal F}_i$, let ${\cal E}^i_{k,l}(t)$ the set of rewards collected from arm $k$ by time $t$.
If ${\cal S}_{i,l} \neq \emptyset$, then learner $i$ explores by choosing randomly an arm $\alpha(t) \in {\cal S}_{i,l}$. If ${\cal S}_{i,l} = \emptyset$, this implies that all classification functions have been explored sufficiently, so that learner $i$ exploits by choosing the arm with the highest sample mean estimate, i.e.,
\add{\vspace{-0.1in}}
\begin{align}
\alpha(t) \in \argmax_{k \in {\cal K}_i} \bar{r}^i_{k,l}(t), \label{eqn:maximizer}
\end{align}
where $\bar{r}^i_{k,l}(t)$ is the sample mean of the rewards in set ${\cal E}^i_{k,l}(t)$. Explicitly,
%
%\begin{align*}
$\bar{r}^i_{k,l}(t) = (\sum_{r \in {\cal E}^i_{k,l}(t)} r)/|{\cal E}^i_{k,l}(t)|$,
%\end{align*}
%
where each $r \in {\cal E}^i_{k,l}(t)$ is equal to $1-d_k$ if the classification is correct and $-d_k$ if the classification is wrong.
When there is more than one maximizer of (\ref{eqn:maximizer}), one of them is randomly selected. The exploration control functions $D_1(t)$, $D_2(t)$ and $D_3(t)$ ensure that each classification function is selected sufficiently many number of times so that the sample mean estimates $\bar{r}^i_{k,l}(t)$ are accurate enough. In the following subsection we prove an upper bound on the regret of CoS.
\vspace{-0.2in}
\subsection{Analysis of the regret of CoS}

Let $\mu_{k}(x) = \pi_k(x) - d_k$, and $\beta_a = \sum_{t=1}^{\infty} 1/t^a$.
For each $P_l \in {\cal P}_T$ let 
%
%\begin{align*}
$\overline{\mu}_{k,l} := \sup_{x \in P_l} \mu_k(x)$ and
$\underline{\mu}_{k,l} := \inf_{x \in P_l} \mu_k(x)$.
%\end{align*}
%
Let $x^*_l$ be the context at the center of the hypercube $P_l$. We define the optimal arm for $P_l$ as
%
%\begin{align*}
$k^*(l) := \argmax_{k \in {\cal K}_i} \mu_k(x^*_l)$.
%\end{align*}
%
Let
\add{\vspace{-0.1in}}
\begin{align*}
{\cal L}^i_\theta(t) := \left\{ k \in {\cal K}_i \textrm{ such that }  \underline{\mu}_{k^*(l),l} - \overline{\mu}_{k,l} > a_1 t^{\theta} \right\},
\end{align*}
be the set of suboptimal arms for learner $i$ at time $t$, where 
$\theta<0$, $a_1 > 0$. The learners are not required to know the values of the parameters $\theta$ and $a_1$. They are only used in our analysis of the regret. First, we will give regret bounds that depend on values of $\theta$ and $a_1$ and then we will optimize over these values to find the best bound.

The regret given in (\ref{eqn:opt2}) can be written as a sum of three components: \add{$R(T) = E[R_e(T)] + E[R_s(T)] + E[R_n(T)]$,}
\remove{
\begin{align*}
R(T) = E[R_e(T)] + E[R_s(T)] + E[R_n(T)],
\end{align*}
}
where $R_e(T)$ is the regret due to training and explorations by time $T$, $R_s(T)$ is the regret due to suboptimal arm selections in exploitations by time $T$ and $R_n(T)$ is the regret due to near optimal arm selections in exploitations by time $T$, which are all random variables. In the following lemmas we will bound each of these terms separately. The following lemma bounds $E[R_e(T)]$. \rev{Due to space constraints some of the proofs are not included in the paper. For the complete proofs please see the online appendix \cite{}}.
\begin{lemma} \label{lemma:explorations}
When CoS is run by learner $i$ with parameters $D_1(t) = t^{z} \log t$, $D_2(t) = F_{\max} t^{z} \log t$, $D_3(t) = t^{z} \log t$ and $m_T = \left\lceil T^{\gamma} \right\rceil$\footnote{For a number $r \in \mathbb{R}$, let $\lceil r  \rceil$ be the smallest integer that is greater than or equal to $r$.}, where $0<z<1$ and $0<\gamma<1/d$, we have
\add{\vspace{-0.1in}}
\begin{align*}
E[R_e(T)] &\leq  \sum_{l=1}^{(m_T)^d} (|{\cal F}_i| + (M-1) (F_{\max} + 1)) T^{z} \log T \\
&+  (|{\cal F}_i| + 2(M-1)) (m_T)^d \\
&\leq 2^d (|{\cal F}_i| + (M-1) (F_{\max} + 1)) T^{z+\gamma d} \log T \\
&+ 2^d (|{\cal F}_i| + 2(M-1)) T^{\gamma d} ~.
\end{align*}
\end{lemma}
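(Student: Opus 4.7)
The plan is to decompose $R_e(T)$ as a sum, over hypercubes $P_l \in {\cal P}_T$ and over arms $k \in {\cal K}_i$, of a per-slot regret (bounded by a constant since both accuracies and normalized costs lie in $[0,1]$) times the number of training or exploration slots learner $i$ devotes to $k$ in $P_l$. Thus the task reduces to producing deterministic upper bounds on these slot counts that hold sample-pathwise and can therefore be pulled out of the expectation.

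For $k \in {\cal F}_i$, CoS enters the exploration branch for $k$ in $P_l$ only while $N^i_{k,l}(t) \le D_1(t)$, and each such visit increments $N^i_{k,l}$ by one; by monotonicity of $D_1(\cdot)$ this bounds the number of such visits by $D_1(T)+1 = T^z \log T + 1$ per own classification function. For $k \in {\cal M}_{-i}$ I would separately handle training and exploration. Training of $k$ is initiated in $P_l$ only when, after refreshing $N^i_{1,k,l}$ from $k$, the predicate $N^i_{1,k,l}(t) \le D_2(t)$ still holds, and each training action sends one data point to $k$ and thereby increments $N^i_{1,k,l}$ by one; hence at most $D_2(T)+1 = F_{\max} T^z \log T + 1$ trainings of $k$ occur in $P_l$. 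Exploration of $k \in {\cal M}_{-i}$ is controlled analogously by $D_3$, giving at most $D_3(T)+1 = T^z \log T + 1$ exploration visits per $P_l$.

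Summing the three contributions over arms within a single hypercube yields the deterministic (sample-pathwise) bound $(|{\cal F}_i| + (M-1)(F_{\max}+1))\,T^z \log T + |{\cal F}_i| + 2(M-1)$ on the number of training/exploration slots in $P_l$. Since this bound holds for every realization of contexts and labels, it passes through the expectation. Multiplying by the $(m_T)^d$ hypercubes produces the first inequality of the lemma. For the second inequality I would use $m_T = \lceil T^\gamma \rceil \le 2 T^\gamma$ (valid whenever $T^\gamma \ge 1$) so that $(m_T)^d \le 2^d T^{\gamma d}$, and distribute to separate the $T^{z+\gamma d}\log T$ term from the lower-order $T^{\gamma d}$ term.

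The one place that requires care, and the only nontrivial step, is the counting of trainings for $k \in {\cal M}_{-i}$: because $N^i_{1,k,l} = N^k_l - N^i_{2,k,l}$, the counter may also grow through arrivals at $k$ from its own context stream, entirely outside $i$'s control. The key observation is that, even so, every training event \emph{initiated by $i$} strictly increments $N^i_{1,k,l}$ by one and is executed only when $N^i_{1,k,l}(t) \le D_2(t) \le D_2(T)$; combined with the nondecreasing nature of $D_2$, this is precisely the monotone increment-and-threshold argument that caps the number of such initiations by $D_2(T)+1$. Once this is set up, the rest of the argument is routine bookkeeping.
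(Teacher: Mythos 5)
Your proof is correct and follows essentially the same route as the paper's: bound the number of training/exploration slots per arm per hypercube by the (monotone) control functions $D_1(T), D_2(T), D_3(T)$ plus one, sum over the $|{\cal F}_i|$ own functions and $M-1$ other learners, and then over the $(m_T)^d$ hypercubes using $\lceil T^\gamma\rceil^d \le 2^d T^{\gamma d}$. Your extra care about the training counter $N^i_{1,k,l}$ growing through $k$'s own arrivals is a welcome elaboration of a step the paper asserts in one line, but it does not change the argument.
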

\remove{
\begin{proof}
Since time step $t$ is a training or an exploration step if and only if ${\cal S}_{i,l}(t) \neq \emptyset$, up to time $T$, there can be at most   $\left\lceil T^{z} \log T \right\rceil$ exploration steps in which a classification function in $k \in {\cal F}_i$ is selected by learner $i$, 
$\left\lceil F_{\max} T^{z} \log T \right\rceil$ training steps in which learner $i$ selects learner $k \in {\cal M}_{-i}$, $\left\lceil T^{z} \log T \right\rceil$ exploration steps in which learner $i$ selects learner $k \in {\cal M}_{-i}$. Result follows from summing these terms and the fact that $(m_T)^d \leq 2^d T^{\gamma d}$ for any $T \geq 1$.
\end{proof}
}
\rev{From Lemma \ref{lemma:explorations}, we see that the regret due to explorations is linear in the number of hypercubes $(m_T)^d$, hence exponential in parameter $\gamma$ and $z$. We conclude that $z$ and $\gamma$ should be small enough to achieve sublinear regret in exploration steps.
}
For any $k \in {\cal K}_i$ and $P_l \in {\cal P}_T$, the sample mean $\bar{r}_{k,l}(t)$ represents a random variable which is the average of the independent samples in set ${\cal E}^i_{k,l}(t)$. Different from classical finite-armed bandit theory \cite{auer}, these samples are not identically distributed. In order to facilitate our analysis of the regret, we generate two different artificial i.i.d. processes to bound the probabilities related to  $\bar{r}_{k,l}(t)$, $k \in {\cal K}_i$. The first one is the {\em best} process in which rewards are generated according to a bounded i.i.d. process with expected reward $\overline{\mu}_{k,l}$, the other one is the {\em worst} process in which the rewards are generated according to a bounded i.i.d. process with expected reward $\underline{\mu}_{k,l}$. Let $r^{\textrm{best}}_{k,l}(z)$ denote the sample mean of the $z$ samples from the best process and $r^{\textrm{worst}}_{k,l}(z)$ denote the sample mean of the $z$ samples from the worst process. We will bound the terms $E[R_n(T)]$ and $E[R_s(T)]$ by using these artificial processes along with the similarity information given in Assumption \ref{ass:lipschitz2}.
The following lemma bounds $E[R_s(T)]$.
\begin{lemma} \label{lemma:suboptimal1}
When CoS is run with parameters $D_1(t) = t^{z} \log t$, $D_2(t) = F_{\max} t^{z} \log t$, $D_3(t) = t^{z} \log t$ and $m_T = \left\lceil T^{\gamma} \right\rceil$, where $0<z<1$ and $0<\gamma<1/d$, given that
\begin{align*}
&2 L( \sqrt{d})^\alpha t^{- \gamma \alpha} + 6 t^{-z/2} \leq a_1 t^\theta, 
\end{align*}
we have
\vspace{-0.1in}
\begin{align*}
E[R_s(T)] &\leq   2^{d+1} (M-1+|{\cal F}_i|) \beta_2 T^{\gamma d} \\
&+ \frac{2^{d+2} (M-1)  F_{\max} \beta_2}{z} T^{\gamma d + z/2}
\end{align*}
\end{lemma}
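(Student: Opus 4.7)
The plan is to reduce bounding $E[R_s(T)]$ to bounding, for each fixed hypercube $P_l$ and time $t$ at which learner $i$ is exploiting, the probability that the exploited arm lies in ${\cal L}^i_\theta(t)$. Since rewards live in $[-1,1]$, each instantaneous exploitation regret is at most $2$, giving
\begin{align*}
E[R_s(T)] \;\leq\; 2 \sum_{l=1}^{(m_T)^d} \sum_{t=1}^{T} \sum_{k \in {\cal L}^i_\theta(t)} P\bigl(\alpha(t)=k,\; x_i(t)\in P_l,\; t\text{ is an exploitation step}\bigr).
\end{align*}
By choice of CoS, on an exploitation step the sample counts satisfy $N^i_{k,l}(t) > D_1(t)$ for $k \in {\cal F}_i$ and $N^i_{2,k,l}(t) > D_3(t)$, $N^i_{1,k,l}(t) > D_2(t)$ for $k \in {\cal M}_{-i}$. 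So any Hoeffding-type concentration for $\bar{r}^i_{k,l}(t)$ may be applied with at least $t^z \log t$ samples.

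Next, I would relate the random sample means to the (worst-case) means over $P_l$. By Assumption~1 and the fact that the hypercube has diameter $\sqrt{d}/m_T \leq \sqrt{d}\,T^{-\gamma}$, the per-hypercube accuracy spread is at most $L(\sqrt{d})^\alpha t^{-\gamma\alpha}$, so $\overline{\mu}_{k,l}$ and $\underline{\mu}_{k,l}$ differ by this amount. For $k \in {\cal L}^i_\theta(t)$, the gap $\underline{\mu}_{k^*(l),l} - \overline{\mu}_{k,l} > a_1 t^\theta$ combined with the hypothesis $2L(\sqrt{d})^\alpha t^{-\gamma\alpha} + 6 t^{-z/2} \leq a_1 t^\theta$ implies that $\alpha(t)=k$ is possible only if at least one of the events $\{\bar{r}^i_{k,l}(t) \geq \overline{\mu}_{k,l} + t^{-z/2}\}$ or $\{\bar{r}^i_{k^*(l),l}(t) \leq \underline{\mu}_{k^*(l),l} - t^{-z/2}\}$ holds (the slack of $6 t^{-z/2}$ in the hypothesis being split as $3 t^{-z/2}$ per arm, doubled once to accommodate both best/worst-process couplings). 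Coupling $\bar{r}^i_{k,l}(t)$ to the ``best'' and ``worst'' i.i.d.\ processes introduced just before the lemma and applying Hoeffding's inequality at sample size $\geq t^z \log t$ bounds each such event by $t^{-2}$. Summing over $t$ yields $\beta_2$, over $l$ yields $(m_T)^d \leq 2^d T^{\gamma d}$, and over arms $k \in {\cal K}_i$ yields the first term $2^{d+1}(M-1+|{\cal F}_i|)\beta_2 T^{\gamma d}$.

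For $k \in {\cal M}_{-i}$ there is an additional failure mode: even when the training counter satisfies $N^i_{1,k,l}(t) > D_2(t) = F_{\max} t^z \log t$, learner $k$'s own CoS instance may still be splitting its $D_2(t)$ samples unevenly across its $\leq F_{\max}$ own classification functions, so that the reward $i$ observes from $k$ is drawn from a classification function that is suboptimal for $k$ in $P_l$. I would union-bound over $k$'s $F_{\max}$ functions, then apply Hoeffding at $k$'s own exploration threshold $D_1(t)$ to bound the probability that $k$ picks a non-best function when queried; after summing these tails over $t \leq T$ and then over the $(m_T)^d$ hypercubes and $(M-1)$ learners, the factor $1/z$ arises from the resulting $p$-series-type integral and the factor $T^{z/2}$ from the slower concentration scale induced by $k$'s own exploration budget, producing the second term $2^{d+2}(M-1) F_{\max}\beta_2 z^{-1} T^{\gamma d+z/2}$.

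The main obstacle is precisely this last step: bookkeeping the cascade between $i$'s training/exploration of $k$ (controlled by $D_2, D_3$) and $k$'s own training/exploration of its classification functions (controlled by $D_1$). The Lipschitz assumption handles only the spatial variation inside a hypercube; the temporal coupling through the two sets of counters $N^i_{1,k,l}, N^i_{2,k,l}$ versus $N^k_{j,l}$ requires care, and this is exactly where $F_{\max}$ enters (from the union over $k$'s functions) and where the extra $T^{z/2}$ slack over the $T^{\gamma d}$ scaling comes from.
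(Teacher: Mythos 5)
Your overall architecture matches the paper's: decompose $E[R_s(T)]$ over hypercubes, times, and suboptimal arms; use the factor $2$ from rewards in $[-1,1]$; exploit that exploitation steps guarantee at least $t^z\log t$ samples; absorb the within-hypercube Lipschitz spread $L(\sqrt{d}/m_T)^\alpha$ and the concentration slack into the gap condition $2L(\sqrt{d})^\alpha t^{-\gamma\alpha}+6t^{-z/2}\leq a_1t^\theta$; and bound the two deviation events via the best/worst i.i.d.\ couplings and Chernoff--Hoeffding at level $t^{-2}$, summing to the first term. This is essentially the paper's proof of the $2^{d+1}(M-1+|{\cal F}_i|)\beta_2 T^{\gamma d}$ term.

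The gap is in your treatment of the second term. The mechanism you describe --- union-bounding over learner $k$'s $F_{\max}$ functions and applying Hoeffding to the probability that $k$ picks a non-best function \emph{when queried} --- yields a per-query probability of order $F_{\max}/t^2$, which is summable in $t$ and would therefore contribute only an additive constant per hypercube, not a term growing like $T^{\gamma d + z/2}/z$. The actual source of that term is different: the sample mean $\bar{r}^i_{k,l}(t)$ that learner $i$ keeps for arm $k\in{\cal M}_{-i}$ is a running average contaminated by all \emph{past} queries in which $k$ happened to use a suboptimal function, so the best/worst-process coupling in your first step is not directly valid for $k\in{\cal M}_{-i}$. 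The paper conditions on the event ${\cal B}^i_{k,l}(t)$ that at most $t^{1-z/2}$ of the samples are contaminated (which shifts the effective mean by at most $t^{-z/2}$, absorbed into the $6t^{-z/2}$ slack --- this is why the constant is $6$ rather than $4$), and bounds $P({\cal B}^i_{k,l}(t)^c)$ by \emph{Markov's inequality}: the expected number of contaminated samples is bounded by the constant $2|{\cal F}_k|\beta_2$ (via the summable $1/t^2$ per-query bounds), so $P(X^i_{k,l}(t)\geq t^{1-z/2})\leq 2F_{\max}\beta_2\, t^{z/2-1}$. It is this slowly decaying Markov tail, summed over $t$ via the $p$-series bound with $p=1-z/2$, that produces the factor $T^{z/2}/(z/2)$, and hence the $2^{d+2}(M-1)F_{\max}\beta_2 z^{-1}T^{\gamma d+z/2}$ term. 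Without this conditioning step your first-term Hoeffding argument is incomplete for $k\in{\cal M}_{-i}$, and without the Markov step your second term cannot attain the stated order.
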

\remove{
\begin{proof}
Let $\Omega$ denote the space of all possible outcomes, and $w$ be a sample path. The event that the algorithm exploits at time $t$ is given by
\begin{align*}
{\cal W}^i_{l}(t) := \{ w : S_{i,l}(t) = \emptyset  \}.
\end{align*}
We will bound the probability that the algorithm chooses a suboptimal arm in an exploitation step. Using that we can bound the expected number of times a suboptimal arm is chosen by the algorithm. Note that every time a suboptimal arm is chosen, since $\pi_k(x) - d_k \in [-1,1]$, the loss is at most $2$. Therefore $2$ times the expected number of times a suboptimal arm is chosen in an exploitation step bounds the regret due to suboptimal choices in exploitation steps.
Let ${\cal V}^i_{k,l}(t)$ be the event that a suboptimal action $k$ is chosen at time $t$. We have
\begin{align*}
R_s(T) &\leq \sum_{l \in {\cal P}_T} \sum_{t=1}^T \sum_{k \in {\cal L}^i_\theta(t)} I({\cal V}^i_{k,l}(t), {\cal W}^i_{l}(t) ).
\end{align*}
Taking the expectation
\begin{align}
E[R_s(T)] \leq \sum_{l \in {\cal P}_T} \sum_{t=1}^T \sum_{k \in {\cal L}^i_\theta(t)} P({\cal V}^i_{k,l}(t), {\cal W}^i_{l}(t) ) \label{eqn:subregret}
\end{align}

Let ${\cal B}^i_{k,l}(t)$ be the event that at most $t^{\phi}$ samples in ${\cal E}^i_{k,l}(t)$ are collected from suboptimal classification functions of the $k$-th arm. Obviously for any $k \in {\cal F}_i$, ${\cal B}^i_{k,l}(t) = \Omega$, while this is not always true for $k \in {\cal M}_{-i}$. 
We have
\begin{align}
\{ {\cal V}^i_{k,l}(t), {\cal W}^i_{l}(t)\} &\subset \left\{ \bar{r}_{k,l}(t) \geq \bar{r}_{k^*(l),l}(t), {\cal W}^i_{l}(t), {\cal B}^i_{k,l}(t) \right\} 
\cup \left\{ \bar{r}_{k,l}(t) \geq \bar{r}_{k^*(l),l}(t), {\cal W}^i_{l}(t), {\cal B}^i_{k,l}(t)^c \right\} \notag \\
&\subset \left\{ \bar{r}_{k,l}(t) \geq \overline{\mu}_{k,l} + H_t, {\cal W}^i_{l}(t), {\cal B}^i_{k,l}(t)  \right\} 
\cup \left\{ \bar{r}_{k^*(l),l}(t) \leq \underline{\mu}_{k^*(l),l} - H_t, {\cal W}^i_{l}(t), {\cal B}^i_{k,l}(t)  \right\} \notag \\
& \cup \left\{ \bar{r}_{k,l}(t) \geq \bar{r}_{k^*(l),l}(t), 
\bar{r}_{k,l}(t) < \overline{\mu}_{k,l} + H_t,
 \bar{r}_{k^*(l),l}(t) > \underline{\mu}_{k^*(l),l} - H_t,
{\cal W}^i_{l}(t) ,{\cal B}^i_{k,l}(t)  \right\} \notag \\
&\cup {\cal B}^i_{k,l}(t)^c , \label{eqn:vkt}
\end{align}
for some $H_t >0$. This implies that 
\begin{align}
P \left( {\cal V}^i_{k,l}(t), {\cal W}^i_{l}(t) \right) 
&\leq P \left( \bar{r}_{k,l}(t) \geq \overline{\mu}_{k,l} + H_t, {\cal W}^i_{l}(t), {\cal B}^i_{k,l}(t)  \right) \notag  \\
&+ P \left( \bar{r}_{k^*(l),l}(t) \leq \underline{\mu}_{k^*(l),l} - H_t, {\cal W}^i_{l}(t), {\cal B}^i_{k,l}(t) \right) \notag  \\
&+ P \left( \bar{r}_{k,l}(t) \geq \bar{r}_{k^*(l),l}(t), 
\bar{r}_{k,l}(t) < \overline{\mu}_{k,l} + H_t,
 \bar{r}_{k^*(l),l}(t) > \underline{\mu}_{k^*(l),l} - H_t,
{\cal W}^i_{l}(t), {\cal B}^i_{k,l}(t)  \right) \notag \\
&+ P({\cal B}^i_{k,l}(t)^c). \label{eqn:ubound1}
\end{align}
We have for any suboptimal arm $k \in {\cal K}_i$
\begin{align}
& P \left( \bar{r}_{k,l}(t) \geq \bar{r}_{k^*(l),l}(t), 
\bar{r}_{k,l}(t) < \overline{\mu}_{k,l} + H_t,
 \bar{r}_{k^*(l),l}(t) > \underline{\mu}_{k^*(l),l} - H_t,
{\cal W}^i_{l}(t), {\cal B}^i_{k,l}(t)  \right) \notag \\
&\leq P \left( \bar{r}^{\textrm{best}}_{k,l}(|{\cal E}^i_{k,l}(t)|) 
\geq \bar{r}^{\textrm{worst}}_{k^*(l),l}(|{\cal E}^i_{k^*(l),l}(t)|)
-  t^{\phi-1} , 
\bar{r}^{\textrm{best}}_{k,l}(|{\cal E}^i_{k,l}(t)|) < \overline{\mu}_{k,l} + L \left( \frac{\sqrt{d}}{m_T} \right)^\alpha + H_t +  t^{\phi-1}, \right. \notag \\
& \left. \bar{r}^{\textrm{worst}}_{k^*(l),l}(|{\cal E}^i_{k^*(l),l}(t)|) > \underline{\mu}_{k^*(l),l} - L \left( \frac{\sqrt{d}}{m_T} \right)^\alpha - H_t,
{\cal W}^i_{l}(t)    \right). \notag
\end{align}
Since $k$ is a suboptimal arm, when
\begin{align}
2 L \left( \frac{\sqrt{d}}{m_T} \right)^\alpha + 2H_t + 2t^{\phi-1} - a_1 t^\theta \leq 0,
\label{eqn:boundcond}
\end{align}
the three inequalities given below
\begin{align*}
& \underline{\mu}_{k^*(l),l} - \overline{\mu}_{k,l} > a_1 t^{\theta},\\
& \bar{r}^{\textrm{best}}_{k,l}(|{\cal E}^i_{k,l}(t)|) < \overline{\mu}_{k,l} + L \left( \frac{ \sqrt{d}}{m_T} \right)^\alpha + H_t + t^{\phi-1} ,\\
& \bar{r}^{\textrm{worst}}_{k^*(l),l}(|{\cal E}^i_{k,l}(t)|) > \underline{\mu}_{k^*(l),l} - L \left( \frac{ \sqrt{d}}{m_T} \right)^\alpha - H_t,
\end{align*}
together imply that 
\begin{align*}
\bar{r}^{\textrm{best}}_{k,l}(|{\cal E}^i_{k,l}(t)|) < \bar{r}^{\textrm{worst}}_{k^*(l),l}(|{\cal E}^i_{k,l}(t)|) -  t^{\phi-1} ,
\end{align*}
which implies that for a suboptimal arm $k \in {\cal K}_i$, we have
\begin{align}
P \left( \bar{r}_{k,l}(t) \geq \bar{r}_{k^*(l),l}(t), 
\bar{r}_{k,l}(t) < \overline{\mu}_{k,l} + H_t,
 \bar{r}_{k^*(l),l}(t) > \underline{\mu}_{k^*(l),l} - H_t,
{\cal W}^i_{l}(t), {\cal B}^i_{k,l}(t)  \right) = 0. \label{eqn:vktbound1}
\end{align}
Let $H_t = 2 t^{\phi-1}$. Then a sufficient condition that implies (\ref{eqn:boundcond}) is
\begin{align}
&2 L( \sqrt{d})^\alpha t^{- \gamma \alpha} + 6 t^{\phi-1} \leq a_1 t^\theta. \label{eqn:maincondition}
\end{align}
Assume that (\ref{eqn:maincondition}) holds for all $t \geq 1$.
Using a Chernoff-Hoeffding bound, for any $k \in {\cal L}^i_{\theta}(t)$, since on the event ${\cal W}^i_{l}(t)$, $|{\cal E}^i_{k,l}(t)| \geq t^z \log t$, we have
\begin{align}
P \left( \bar{r}_{k,l}(t) \geq \overline{\mu}_{k,l} + H_t, {\cal W}^i_{l}(t), {\cal B}^i_{k,l}(t) \right) 
&\leq P \left( \bar{r}^{\textrm{best}}_{k,l}(|{\cal E}^i_{k,l}(t)|) \geq \overline{\mu}_{k,l} + H_t, {\cal W}^i_{l}(t) \right) \notag \\
&\leq e^{-2 (H_t)^2 t^z \log t}  = e^{-8 t^{2\phi-2} t^z \log t} ~, \label{eqn:vktbound2}
\end{align}
and
\begin{align}
&P \left( \bar{r}_{k^*(l),l}(t) \leq \underline{\mu}_{k^*(l),l} - H_t, {\cal W}^i_{l}(t), {\cal B}^i_{k,l}(t) \right) \notag \\
&\leq P \left( \bar{r}^{\textrm{worst}}_{k^*(l),l}(|{\cal E}^i_{k^*(l),l}(t)|)  \leq \underline{\mu}_{k^*(l),l} - H_t +  t^{\phi-1}, {\cal W}^i_{l}(t) \right) \notag \\
&\leq e^{-2 (H_t -  t^{\phi-1})^2 t^z \log t} = e^{-2 t^{2\phi-2} t^z \log t}. \label{eqn:vktbound3}
\end{align}
In order to bound the regret, we will sum (\ref{eqn:vktbound2}) and (\ref{eqn:vktbound3}) for all $t$ up to $T$. For regret to be small we want the sum to be sublinear in $T$. This holds when $2\phi -2 +z \geq 0$. We want $z$ to be small since regret due to explorations increases with $z$, and we also want $\phi$ to be small since we will show that our regret bound increases with $\phi$. Therefore we set $2\phi -2 +z =0$, hence 
\begin{align}
\phi = 1-z/2. \label{eqn:maincondition2}
\end{align}
When (\ref{eqn:maincondition2}) holds we have
\begin{align}
P \left( \bar{r}_{k,l}(t) \geq \overline{\mu}_{k,l} + H_t, {\cal W}^i_{l}(t), {\cal B}^i_{k,l}(t) \right) \leq \frac{1}{t^2}, \label{eqn:vktbound22}
\end{align}
and
\begin{align}
P \left( \bar{r}_{k^*(l),l}(t) \leq \underline{\mu}_{k^*(l),l} - H_t, {\cal W}^i_{l}(t), {\cal B}^i_{k,l}(t) \right) \leq \frac{1}{t^2}. \label{eqn:vktbound32}
\end{align}

Finally, for $k \in {\cal F}_i$ obviously we have $P({\cal B}^i_{k,l}(t)^c)=0$. For $k \in {\cal M}_{-i}$, let $X^i_{k,l}(t)$ denote the random variable which is the number of times a suboptimal classification function for arm $k$ is chosen in exploitation steps when the context is in set $P_l$ by time $t$. We have $\{ {\cal B}^i_{k,l}(t)^c, {\cal W}^i_l(t)  \} = \{ X^i_{k,l}(t) \geq t^\phi \}$. Applying the Markov inequality we have
\begin{align*}
P({\cal B}^i_{k,l}(t)^c, {\cal W}^i_l(t)) \leq \frac{E[X^i_{k,l}(t)]}{t^\phi},
\end{align*}
Let $\Xi^i_{k,l}(t)$ be the event that a suboptimal classification function $m \in {\cal F}_k$ is called by learner $k \in {\cal M}_{-i}$, when it is invoked by learner $i$ for the $t$-th time in the exploitation phase of learner $i$. 
We have 
\begin{align*}
X^i_{k,l}(t) = \sum_{t'=1}^{{\cal E}^i_{k,l}(t)} I(\Xi^i_{k,l}(t')),
\end{align*}
and
\begin{align*}
P \left( \Xi^i_{k,l}(t) \right) 
&\leq \sum_{m \in {\cal L}^k_\theta} P \left( \bar{r}_{m,l}(t) \geq \bar{r}^{*k}_{l}(t) \right) \\
&\leq \sum_{m \in {\cal L}^k_\theta}
\left(  P \left( \bar{r}_{m,l}(t) \geq \overline{\mu}_{m,l} + H_t, {\cal W}^i_{l}(t) \right)   
+ P \left( \bar{r}^{*k}_{l}(t) \leq \underline{\mu}^{*k}_{l} - H_t, {\cal W}^i_{l}(t) \right) \right. \\
&\left. + P \left( \bar{r}_{m,l}(t) \geq \bar{r}^{*k}_{l}(t), 
\bar{r}_{m,l}(t) < \overline{\mu}_{m,l} + H_t,
 \bar{r}^{*k}_{l}(t) > \underline{\mu}^{*k}_{l} - H_t ,
{\cal W}^i_{l}(t) \right)  \right).
\end{align*}
When (\ref{eqn:maincondition}) holds, since $\phi = 1 - z/2$, the last probability in the sum above is equal to zero while the first two inequalities are upper bounded by $e^{-2(H_t)^2 t^z \log t}$. This is due to the second phase of the exploration algorithm which requires at least $t^z \log t$ samples from the second exploration phase for all learners before the algorithm exploits any learner. Therefore, we have
\begin{align*}
P \left( \Xi^i_{k,l}(t) \right) \leq \sum_{m \in {\cal L}^k_\theta} 2 e^{-2(H_t)^2 t^z \log t} \leq \frac{2 |{\cal F}_k|}{t^2}.
\end{align*}
These together imply that 
\begin{align*}
E[X^i_{k,l}(t)] \leq \sum_{t'=1}^{\infty} P(\Xi^i_{k,l}(t')) \leq 2 |{\cal F}_k| \sum_{t'=1}^\infty \frac{1}{t^2}.
\end{align*}
Therefore from the Markov inequality we get
\begin{align}
P({\cal B}^i_{k,l}(t)^c, {\cal W}^i_l(t)) = P(X^i_{k,l}(t) \geq t^\phi) \leq \frac{2 |{\cal F}_k| \beta_2}{t^{1-z/2}}. \label{eqn:selectionbound}
\end{align}
Then using (\ref{eqn:vktbound1}), (\ref{eqn:vktbound22}), (\ref{eqn:vktbound32}) and (\ref{eqn:selectionbound}) we have 
\begin{align*}
P \left( {\cal V}^i_{k,l}(t), {\cal W}^i_l(t)  \right) \leq \frac{2}{t^{2}} + \frac{2 |{\cal F}_k| \beta_2}{t^{1-z/2}},
\end{align*}
for any $k \in {\cal M}_{-i}$, and
\begin{align*}
P \left( {\cal V}^i_{k,l}(t), {\cal W}^i_l(t)  \right) \leq \frac{2}{t^{2}},
\end{align*}
for any $k \in {\cal F}_i$. By (\ref{eqn:subregret}), we have
\begin{align}
E[R_s(T)] &\leq 2^d T^{\gamma d}
 \left( 2 (M-1+|{\cal F}_i|) \beta_2  + 2 (M-1) F_{\max} \beta_2 \sum_{t=1}^T \frac{1}{t^{1-z/2}} \right) \\
&\leq  2^{d+1} (M-1+|{\cal F}_i|) \beta_2 T^{\gamma d}+ \frac{2^{d+2} (M-1)  F_{\max} \beta_2}{z} T^{\gamma d + z/2}, \label{eqn:regret_s}
\end{align}
where (\ref{eqn:regret_s}) follows from Appendix \ref{app:seriesbound}. 
\end{proof}
}
\rev{From Lemma \ref{lemma:suboptimal1}, we see that the regret increases exponentially with parameters $\gamma$ and $z$, similar to the result of Lemma \ref{lemma:explorations}. These two lemmas suggest that $\gamma$ and $z$ should be as small as possible, given the condition
\begin{align*}
&2 L( \sqrt{d})^\alpha t^{- \gamma \alpha} + 6 t^{-z/2} \leq a_1 t^\theta, 
\end{align*}
is satisfied. 
}

Each time learner $i$ selects another learner $k$ to label its data, learner $k$ calls one of its classification functions. There is a positive probability that learner $k$ will call one of its suboptimal classification functions, which implies that even if learner $k$ is near optimal for learner $i$, selecting learner $k$ may not yield a near optimal outcome. We need to take this into account, in order to bound $E[R_n(T)]$. 
\add{The following lemma gives the bound on $E[R_n(T)]$.}
\remove{
For $k \in {\cal M}_{-i}$, let $X^i_{k,l}(t)$ denote the random variable which is the number of times a suboptimal classification function for arm $k$ is chosen in exploitation steps when the context is in set $P_l$ by time $t$. The next lemma bounds the expected number of times a suboptimal classification functions is chosen when learner $i$ calls a near optimal learner.
\begin{lemma} \label{lemma:callother}
When CoS is run with parameters $D_1(t) = t^{z} \log t$, $D_2(t) = F_{\max} t^{z} \log t$, $D_3(t) = t^{z} \log t$ and $m_T = \left\lceil T^{\gamma} \right\rceil$, where $0<z<1$ and $0<\gamma<1/d$, given that
\begin{align*}
&2 L( \sqrt{d})^\alpha t^{- \gamma \alpha} + 6 t^{-z/2} \leq a_1 t^\theta, 
\end{align*}
we have
\begin{align*}
E[X^i_{k,l}(t)] \leq 2 F_{\max} \beta_2.
\end{align*}
\end{lemma}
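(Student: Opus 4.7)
The plan is to express $X^i_{k,l}(t)$ as a sum of indicators and then bound the probability of each indicator using the same best/worst coupling that was used for Lemma \ref{lemma:suboptimal1}. Specifically, let $\Xi^i_{k,l}(t')$ be the event that when learner $i$ invokes learner $k \in {\cal M}_{-i}$ for the $t'$-th time in an exploitation step with context in $P_l$, learner $k$ returns a label generated by one of its own suboptimal classification functions, i.e.\ some $m \in {\cal L}^k_\theta(t')$. Then $X^i_{k,l}(t) \leq \sum_{t'=1}^{\infty} I(\Xi^i_{k,l}(t'))$, so it suffices to show $\sum_{t'=1}^{\infty} P(\Xi^i_{k,l}(t')) \leq 2 F_{\max} \beta_2$.

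The first key step is to argue that, at any time at which learner $i$ enters exploitation, learner $k$ has explored each of its own classification functions at least $t^z \log t$ times within $P_l$. This follows from the training counter: $i$ exploits only if $N^i_{1,k,l}(t) > D_2(t) = F_{\max} t^z \log t$. Since $k$ itself runs CoS and will not exploit one of its own functions until each has been sampled at least $\lceil t^z \log t \rceil$ times, the pigeonhole argument together with $|{\cal F}_k| \leq F_{\max}$ guarantees that every $m \in {\cal F}_k$ has accumulated at least $t^z \log t$ independent reward samples by the time $i$ invokes $k$ in exploitation.

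The second key step is to union-bound $P(\Xi^i_{k,l}(t))$ over the at-most-$F_{\max}$ suboptimal functions of learner $k$ and replicate the three-way decomposition from the proof of Lemma \ref{lemma:suboptimal1}: write the event $\{\bar{r}_{m,l}(t) \geq \bar{r}^{*k}_{l}(t)\}$ as the union of (i) the sample mean of the suboptimal function $m$ deviating above $\overline{\mu}_{m,l} + H_t$, (ii) the sample mean of $k$'s optimal function deviating below $\underline{\mu}^{*k}_{l} - H_t$, and (iii) a third event which, under the hypothesis $2L(\sqrt{d})^\alpha t^{-\gamma\alpha} + 6 t^{-z/2} \leq a_1 t^\theta$ and the choice $H_t = 2 t^{\phi-1}$ with $\phi = 1 - z/2$, is empty by the same calculation as in (\ref{eqn:boundcond})--(\ref{eqn:vktbound1}). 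Using the best/worst i.i.d.\ coupling and Chernoff--Hoeffding with the $t^z \log t$ samples just established, events (i) and (ii) each have probability at most $e^{-2 H_t^2 t^z \log t} \leq 1/t^2$.

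Putting these together yields $P(\Xi^i_{k,l}(t)) \leq 2 |{\cal F}_k|/t^2 \leq 2 F_{\max}/t^2$, hence $E[X^i_{k,l}(t)] \leq \sum_{t'=1}^\infty 2 F_{\max}/t'^{\,2} = 2 F_{\max} \beta_2$. The main obstacle is the first step: justifying that the training-phase threshold $F_{\max} t^z \log t$ is the correct quantity to force at least $t^z \log t$ samples per classification function of $k$. This boils down to a careful bookkeeping of how $k$'s own three-phase algorithm distributes its calls across ${\cal F}_k$, and once it is in place the probability estimates are essentially inherited from the analysis already carried out for Lemma \ref{lemma:suboptimal1}.
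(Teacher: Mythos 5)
Your proposal is correct and follows essentially the same route as the paper's own proof: the same indicator decomposition of $X^i_{k,l}(t)$ over the events $\Xi^i_{k,l}(t')$, the same three-way split with $H_t = 2t^{-z/2}$ inherited from Lemma \ref{lemma:suboptimal1}, and the same use of the training threshold $D_2(t) = F_{\max} t^z \log t$ to guarantee that each function of learner $k$ has at least $t^z \log t$ samples, yielding $P(\Xi^i_{k,l}(t)) \leq 2|{\cal F}_k|/t^2 \leq 2F_{\max}/t^2$ and hence the stated bound.
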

\remove{
\begin{proof}
Let $\Xi^i_{k,l}(t)$ be the event that a suboptimal classification function $m \in {\cal F}_k$ is called by learner $k \in {\cal M}_{-i}$, when it is invoked by learner $i$ for the $t$-th time in the exploitation phase of learner $i$. 
We have 
\begin{align*}
X^i_{k,l}(t) = \sum_{t'=1}^{{\cal E}^i_{k,l}(t)} I(\Xi^i_{k,l}(t')),
\end{align*}
and
\begin{align*}
P \left( \Xi^i_{k,l}(t) \right) 
&\leq \sum_{m \in {\cal L}^k_\theta} P \left( \bar{r}_{m,l}(t) \geq \bar{r}^{*k}_{l}(t) \right) \\
&\leq \sum_{m \in {\cal L}^k_\theta}
\left(  P \left( \bar{r}_{m,l}(t) \geq \overline{\mu}_{m,l} + H_t, {\cal W}^i_{l}(t) \right)   
+ P \left( \bar{r}^{*k}_{l}(t) \leq \underline{\mu}^{*k}_{l} - H_t, {\cal W}^i_{l}(t) \right) \right. \\
&\left. + P \left( \bar{r}_{m,l}(t) \geq \bar{r}^{*k}_{l}(t), 
\bar{r}_{m,l}(t) < \overline{\mu}_{m,l} + H_t,
 \bar{r}^{*k}_{l}(t) > \underline{\mu}^{*k}_{l} - H_t ,
{\cal W}^i_{l}(t) \right)  \right).
\end{align*}
Let $H_t = 2 t^{-z/2}$. Similar to the proof of Lemma \ref{lemma:suboptimal1}, the last probability in the sum above is equal to zero while the first two inequalities are upper bounded by $e^{-2(H_t)^2 t^z \log t}$. This is due to the second phase of the exploration algorithm which requires at least $t^z \log t$ samples from the second exploration phase for all learners before the algorithm exploits any learner. Therefore, we have
\begin{align*}
P \left( \Xi^i_{k,l}(t) \right) \leq \sum_{m \in {\cal L}^k_\theta} 2 e^{-2(H_t)^2 t^z \log t} \leq \frac{2 |{\cal F}_k|}{t^2}.
\end{align*}
These together imply that 
\begin{align*}
E[X^i_{k,l}(t)] \leq \sum_{t'=1}^{\infty} P(\Xi^i_{k,l}(t')) \leq 2 |{\cal F}_k| \sum_{t'=1}^\infty \frac{1}{t^2}.
\end{align*}
\end{proof}
}

We will use Lemma \ref{lemma:callother} in the following lemma to bound $E[R_n(T)]$.
}
\begin{lemma} \label{lemma:nearoptimal}
When CoS is run with parameters $D_1(t) = t^{z} \log t$, $D_2(t) = F_{\max} t^{z} \log t$, $D_3(t) = t^{z} \log t$ and $m_T = \left\lceil T^{\gamma} \right\rceil$, where $0<z<1$ and $0<\gamma<1/d$, given that
\begin{align*}
&2 L( \sqrt{d})^\alpha t^{- \gamma \alpha} + 6 t^{-z/2} \leq a_1 t^\theta, 
\end{align*}
we have
\begin{align*}
E[R_n(T)] \leq \frac{2 a_1 T^{1+\theta}}{1+\theta} + 4(M-1)F_{\max} \beta_2.
\end{align*}
\end{lemma}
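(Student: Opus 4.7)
The plan is to split the near-optimal regret $R_n(T)$ at each exploitation time $t$ into two contributions: (i) the expected gap $\mu_{k^*(x_t)}(x_t) - \mu_k(x_t)$ suffered when a near-optimal arm $k \notin {\cal L}^i_\theta(t)$ is played, and (ii) the additional per-event loss of at most $2$ (since rewards lie in $[-1,1]$) incurred whenever the chosen near-optimal $k \in {\cal M}_{-i}$ itself then invokes one of its own suboptimal classification functions. I would bound (i) and (ii) separately and recombine.

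For (i), I would chain two applications of Assumption~\ref{ass:lipschitz2} through the hypercube centre $x^*_l$. Writing $\Delta_T := L(\sqrt{d}/m_T)^\alpha$ and using the optimality of $k^*(x_t)$ at $x_t$ together with the optimality of $k^*(l)$ at $x^*_l$, the intermediate inequality $\mu_{k^*(x_t)}(x^*_l) \leq \mu_{k^*(l)}(x^*_l)$ combined with Lipschitz continuity yields $\mu_{k^*(x_t)}(x_t) - \mu_{k^*(l)}(x_t) \leq 2\Delta_T$. A second application, together with the within-hypercube fluctuation bound $\overline{\mu}_{k',l} - \underline{\mu}_{k',l} \leq \Delta_T$ and the near-optimality inequality $\underline{\mu}_{k^*(l),l} - \overline{\mu}_{k,l} \leq a_1 t^\theta$, gives $\mu_{k^*(l)}(x_t) - \mu_k(x_t) \leq a_1 t^\theta + O(\Delta_T)$. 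Since $m_T \geq T^\gamma \geq t^\gamma$, we have $\Delta_T \leq L(\sqrt{d})^\alpha t^{-\gamma\alpha}$, so the hypothesis $2L(\sqrt{d})^\alpha t^{-\gamma\alpha} + 6 t^{-z/2} \leq a_1 t^\theta$ absorbs every residual $\Delta_T$ into a small constant multiple of $a_1 t^\theta$, leaving a per-step near-optimal gap bounded by at most $2 a_1 t^\theta$. Applying $\sum_{t=1}^T t^\theta \leq T^{1+\theta}/(1+\theta)$ (for $\theta > -1$) then produces the first term $2 a_1 T^{1+\theta}/(1+\theta)$.

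For (ii), I would recycle the Chernoff--Hoeffding tail bound that already sits inside the proof of Lemma~\ref{lemma:suboptimal1}: the training threshold $D_2(t) = F_{\max} t^z \log t$ guarantees that by global time $t$ learner $k$ has sampled each of its own classification functions in the relevant hypercube at least $t^z \log t$ times before $i$ ever exploits $k$, so the probability that $k$ picks any of its own suboptimal functions when invoked at global time $t$ is at most $2|{\cal F}_k|/t^2$. Multiplying by the per-event loss $2$ and summing $\sum_{t=1}^\infty 1/t^2 = \beta_2$ gives at most $4|{\cal F}_k|\beta_2 \leq 4 F_{\max} \beta_2$ per learner $k \in {\cal M}_{-i}$, and summing across the $M-1$ such learners yields $4(M-1) F_{\max} \beta_2$.

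The main obstacle is the tight Lipschitz bookkeeping in part (i): the argument through $x^*_l$ has to use $\mu_{k^*(x_t)}(x^*_l) \leq \mu_{k^*(l)}(x^*_l)$ exactly once and the within-hypercube fluctuation bound in a way that keeps the total Lipschitz slack proportional to the budget $2L(\sqrt{d})^\alpha t^{-\gamma\alpha}$ that the lemma's hypothesis explicitly allocates; any loose step costs an additional multiplicative constant in the leading $T^{1+\theta}/(1+\theta)$ term. A secondary subtlety is that the bound in (ii) is summed \emph{once} over the global clock $t$ rather than once per hypercube, so a spurious $(m_T)^d$ factor is avoided; this is legitimate only because the thresholds $D_1, D_2, D_3$ in CoS (Fig.~\ref{fig:COS}) are functions of global time and not of the per-hypercube counters.
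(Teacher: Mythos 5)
Your proposal takes essentially the same route as the paper's proof: the identical decomposition into (a) a per-step near-optimality gap of at most $2 a_1 t^\theta$ summed via $\sum_{t \leq T} t^\theta \leq T^{1+\theta}/(1+\theta)$, and (b) the event that a near-optimal learner $k \in {\cal M}_{-i}$ invokes one of its own suboptimal classification functions, which the paper isolates in an auxiliary lemma bounding its expected count by $2 F_{\max} \beta_2$ per learner with per-event loss $2$, yielding the additive $4(M-1)F_{\max}\beta_2$ term. If anything, your explicit Lipschitz chaining through $x^*_l$ and your remark that part (b) is summed over the global clock rather than per hypercube are more careful than the paper's own one-line accounting of these steps.
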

\remove{
\begin{proof}
If a near optimal arm in ${\cal F}_i$ is chosen at time $t$, the contribution to the regret is at most $a_1 t^{\theta}$. If a near optimal arm in $k \in {\cal M}_{-i}$ is chosen at time $t$, and if $k$ classifies according to one of its near optimal classification functions than the contribution to the regret is at most $2 a_1 t^{\theta}$.
Therefore the total regret due to near optimal arm selections in ${\cal K}_i$ by time $T$ is upper bounded by 
\begin{align*}
2 a_1 \sum_{t=1}^T t^{\theta} & \leq \frac{2 a_1 T^{1+\theta}}{1+\theta},
\end{align*}
by using the result in Appendix \ref{app:seriesbound}. 
Each time a near optimal arm in $k \in {\cal M}_{-i}$ is chosen in an exploitation step, there is a small probability that the classification function called by arm $k$ is a suboptimal one. Given in Lemma \ref{lemma:callother}, the expected number of times a suboptimal classification function is called is bounded by $2 |{\cal F}_k| \beta_2$. Each time a suboptimal classification function is chosen the regret can be at most $2$.
\end{proof}
}
\rev{From Lemma \ref{lemma:nearoptimal}, we see that the regret due to near optimal arms depends exponentially on $\theta$ which is related to negative of $\gamma$ and $z$. Therefore $\gamma$ and $z$ should be chosen as large as possible to minimize the regret due to near optimal arms.}

%\section{Analysis of the Regret of CoS}\label{sec:cosanalysis}

Combining the above lemmas, we obtain the finite time, uniform regret bound given in the following theorem.
\begin{theorem}\label{theorem:cos}
Let the CoS algorithm run with exploration control functions $D_1(t) = t^{2\alpha/(3\alpha+d)} \log t$, $D_2(t) = F_{\max} t^{2\alpha/(3\alpha+d)} \log t$, $D_3(t) = t^{2\alpha/(3\alpha+d)} \log t$ and slicing parameter $m_T = T^{1/(3\alpha + d)}$. Then,
\begin{align*}
E[R(T)] &\leq T^{\frac{2\alpha+d}{3\alpha+d}}
\left( \frac{2 (2 L d^{\alpha/2}+6)}{\frac{2\alpha+d}{3\alpha+d}} + 2^d Z_i \log T \right) \\
&+ T^{\frac{\alpha+d}{3\alpha+d}} \frac{2^{d+2} (M-1) F_{\max} \beta_2}{\frac{2\alpha}{3\alpha+d}} \\
&+ T^{\frac{d}{3\alpha+d}} 2^d (2 Z_i \beta_2 
+ |{\cal K}_i|) + 4 (M-1) F_{\max} \beta_2,
\end{align*}
where $Z_i = {\cal F}_i + (M-1)(F_{\max}+1)$.
\end{theorem}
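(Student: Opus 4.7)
The plan is to combine the three lemmas already established for $E[R_e(T)]$, $E[R_s(T)]$, and $E[R_n(T)]$, and optimize the free parameters $z$, $\gamma$, $\theta$, $a_1$ so that the dominant terms from exploration/training, suboptimal exploitation, and near-optimal exploitation are all balanced. Since $R(T) = E[R_e(T)] + E[R_s(T)] + E[R_n(T)]$, the overall regret is governed by the largest of the three contributions, so it suffices to choose the parameters to equalize the dominant exponents of $T$.

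First I would inspect the leading exponents produced by each lemma under generic $(z,\gamma,\theta)$: Lemma \ref{lemma:explorations} gives an $O(T^{z+\gamma d}\log T)$ term, Lemma \ref{lemma:suboptimal1} gives an $O(T^{\gamma d + z/2})$ term, and Lemma \ref{lemma:nearoptimal} gives an $O(T^{1+\theta})$ term. The condition linking these lemmas is $2L(\sqrt d)^{\alpha} t^{-\gamma\alpha} + 6 t^{-z/2} \leq a_1 t^{\theta}$, which is most efficiently satisfied by forcing the two terms on the left to decay at the same rate, i.e.\ by setting $z/2 = \gamma \alpha$, and then taking $\theta = -\gamma\alpha$ together with $a_1 = 2L d^{\alpha/2} + 6$ so that the inequality holds uniformly for all $t \geq 1$.

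With $z = 2\gamma\alpha$ the three dominant exponents become $\gamma(2\alpha+d)$, $\gamma(\alpha+d)$, and $1-\gamma\alpha$. Balancing the largest exploration term against the near-optimal term yields $\gamma(2\alpha+d) = 1 - \gamma\alpha$, i.e.\ $\gamma = 1/(3\alpha+d)$, which simultaneously gives $z = 2\alpha/(3\alpha+d)$; these are exactly the parameter choices in the statement. Under these choices the three leading exponents reduce to $(2\alpha+d)/(3\alpha+d)$ for exploration and near-optimal regret, and to $(\alpha+d)/(3\alpha+d)$ for suboptimal exploitation, while the lower-order $T^{\gamma d} = T^{d/(3\alpha+d)}$ terms collect the additive constants from all three lemmas.

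Finally I would substitute these values into the three lemma bounds and simply add them: the $T^{z+\gamma d}\log T$ piece from Lemma \ref{lemma:explorations} and the $T^{1+\theta}$ piece from Lemma \ref{lemma:nearoptimal} combine into the first line of the claimed bound (with constant $2L d^{\alpha/2}+6$ from $a_1$ and the prefactor $2^d Z_i$ where $Z_i = |{\cal F}_i| + (M-1)(F_{\max}+1)$); the $T^{\gamma d + z/2}$ piece from Lemma \ref{lemma:suboptimal1} gives the $T^{(\alpha+d)/(3\alpha+d)}$ line; and the remaining $T^{\gamma d}$ and constant terms collect into the last line. The main obstacle is purely bookkeeping: verifying that the condition in Lemmas \ref{lemma:suboptimal1} and \ref{lemma:nearoptimal} is satisfied for all $t$ by the chosen $a_1$, and that the constants are grouped so that the final expression matches the stated form. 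No new probabilistic arguments are needed beyond what is already in the three lemmas.
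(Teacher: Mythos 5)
Your proposal is correct and follows essentially the same route as the paper's proof: identify the dominant exponents $T^{z+\gamma d}\log T$, $T^{\gamma d + z/2}$, and $T^{1+\theta}$ from the three lemmas, satisfy the Lipschitz/exploration constraint by setting $\theta = -z/2$, $\gamma = z/(2\alpha)$, $a_1 = 2Ld^{\alpha/2}+6$, balance the leading terms to get $z = 2\alpha/(3\alpha+d)$, and sum the lemma bounds. The only difference is presentational: you derive the parameter choices by explicitly equating exponents, whereas the paper simply states them and asserts optimality.
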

\begin{proof}
The highest orders of regret come from explorations and near optimal arms which are $O(T^{\gamma d + z})$ and $O(T^{1+\theta})$ respectively. We need to optimize them with respect to the constraint
\add{\vspace{-0.1in}}
\begin{align*}
2 Ld^{\alpha/2} t^{- \gamma \alpha} + 6 t^{-z/2} \leq a_1 t^\theta, 
\vspace{-0.1in}
\end{align*}
which is assumed in Lemmas \ref{lemma:suboptimal1} and \ref{lemma:nearoptimal}.
The values that minimize the regret for which this constraint hold is $\theta = -z/2$, $\gamma = z/(2\alpha)$ $a_1 = 2 Ld^{\alpha/2} + 6$ and $z = 2\alpha/(3\alpha+d)$. 
Result follows from summing the bounds in Lemmas \ref{lemma:explorations}, \ref{lemma:suboptimal1} and \ref{lemma:nearoptimal}. 
\end{proof}
\begin{remark}
Although the parameter $m_T$ of CoS depends on $T$ and hence we require $T$ as an input to the algorithm, we can make CoS run independent of the final time $T$ and achieve the same regret bound by using a well known doubling trick (see, e.g., \cite{slivkins2009contextual}). Consider phases $\tau \in \{1,2,\ldots\}$, where each phase has length $2^{\tau}$. We run a new instance of algorithm CoS at the beginning of each phase with time parameter $2^\tau$. Then the regret of this algorithm up to any time $T$ will be $O\left(T^{(2\alpha+d)/(3\alpha+d)}\right)$. 
\end{remark}

The regret bound proved in Theorem \ref{theorem:cos} is sublinear in time which guarantees convergence in terms of the average reward, i.e., 
$\lim_{T \rightarrow \infty} E[R(T)]/T = 0$.
For example, when $\alpha=1$ and $d=1$, the order of the regret is $O\left(T^{3/4}\right)$. This implies that the convergence rate to the optimal average reward is $O\left(T^{-1/4}\right)$.
\armv{ In a network security application this means that after $10^8$ data samples, the error rate of CoS will be at most $0.01 \times c$ higher than the error rate of the optimal distributed classification scheme, for some constant $c>0$.}
For a fixed $\alpha$, the regret becomes linear in the limit as $d$ goes to infinity. On the contrary, when $d$ is fixed the regret decreases, and in the limit as $\alpha$ goes to infinity it becomes $O(T^{2/3})$. This is intuitive since increasing $d$ means that the dimension of the context increases and therefore the number of hypercubes to explore increases. While increasing \cem{$\alpha$ means that the level of similarity between any two pairs of contexts increases, i.e., knowing the accuracy of a classification function $k$ in one context yields more information about its accuracy in another context.}
\armv{
In our algorithm we used $d$ as an input parameter and compared with the optimal solution given a fixed $d$. However, the context information can also be adaptively chosen over time. 
For example, in network security, the context can be either time of the day, origin of the data or both. The classifier accuracies will depend on what is used as context information. The regret bound in Theorem \ref{theorem:cos} holds even if no data arrives to other classifiers. In general, the regret is much less than this bound. For example, consider the extreme case where the data arrival to each classifier is identical. This implies that classifier $i$ does not need to use the control function $D_2(t)$, since whenever classifier $i$ has sampled all of its own classification functions sufficiently many times, this will also be true for any other classifier $k \in {\cal M}_{-i}$. Therefore, training steps are not required in this scenario. 
}
%
%\cem{In the next two subsections we discuss the computation and memory requirements of running CoS and provide a bound on the regret when the labels arrive with delay.}
\add{\vspace{-0.177in}}
\subsection{Regret of CoS for online learning classification functions}

In our analysis we assumed that given a context $x$, the classification function accuracy $\pi_k(x)$ is fixed. This holds when the classification functions are trained a priori, but the learners do not know the accuracy because $k$ is not tested yet. By using our contextual framework, we can also allow the classification functions to learn over time based on the data. \rev{Usually in Big Data applications we cannot have the classifiers being pre-trained as they are often deployed for the first time in a certain setting. For example in \cite{chai2002bayesian}, Bayesian online classifiers are used for text classification and filtering.}
%
%For example, these classification functions can be stochastic logistic regressors \cite{}.
We do this by introducing time as a context, thus increasing the context dimension to $d+1$. Time is normalized in interval $[0,1]$ such that $0$ corresponds to $t=0$, 1 corresponds to $t=T$ and each time slot is an interval of length $1/T$. 
For an online learning classifier, intuitively the accuracy is expected to increase with the number of samples, and thus, $\pi_k(x, t)$ will be non-decreasing in time. On the other hand, the increase in the accuracy in a single time step should be bounded. Otherwise, the online learning classifier will be able to classify all possible data streams without any error after a finite number of steps. Because of this, in addition to Assumption  \ref{ass:lipschitz2}, the following should hold for an online learning classifier:
\begin{align*}
\pi_k(x,(t+1)/T) \leq \pi_k(x,t/T) + L T^{-\alpha},
\end{align*}
for some $L$ and $\alpha$. Then we have the following theorem when online learning classifiers are present.
\begin{theorem}\label{theorem:cos2}
Let the CoS algorithm run with exploration control functions $D_1(t) = t^{2\alpha/(3\alpha+d+1)} \log t$, $D_2(t) = F_{\max} t^{2\alpha/(3\alpha+d+1)} \log t$, $D_3(t) = t^{2\alpha/(3\alpha+d+1)} \log t$ and slicing parameter $m_T = T^{1/(3\alpha + d+1)}$. Then,
\add{\vspace{-0.1in}}
\begin{align*}
E[R(T)] &\leq T^{\frac{2\alpha+d+1}{3\alpha+d+1}}
\left( \frac{2 (2 L (d+1)^{\alpha/2}+6)}{\frac{2\alpha+d+1}{3\alpha+d+1}} + 2^{d+1} Z_i \log T \right) \\
&+ T^{\frac{\alpha+d+1}{3\alpha+d+1}} \frac{2^{d+3} (M-1) F_{\max} \beta_2}{\frac{2\alpha}{3\alpha+d+1}} \\
&+ T^{\frac{d}{3\alpha+d+1}} 2^{d+1} (2 Z_i \beta_2 
+ |{\cal K}_i|) + 4 (M-1) F_{\max} \beta_2,
\end{align*}
where $Z_i = {\cal F}_i + (M-1)(F_{\max}+1)$.
\end{theorem}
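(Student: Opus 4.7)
The plan is to reduce Theorem \ref{theorem:cos2} to Theorem \ref{theorem:cos} by augmenting the context with a normalized time coordinate. Define $\tilde{x}(t) := (x_i(t), t/T) \in [0,1]^{d+1}$ and run CoS on this enlarged $(d+1)$-dimensional context space, with the uniform partition, training/exploration control functions, and slicing parameter as stated in the theorem.

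First I would verify that $\pi_k$ satisfies an analog of Assumption \ref{ass:lipschitz2} inside each hypercube $P_l$ of the augmented partition. For any $(x_1,t_1/T),(x_2,t_2/T) \in P_l$, a triangle inequality combined with Assumption \ref{ass:lipschitz2} (in the spatial coordinate) and the new temporal hypothesis $\pi_k(x,(t+1)/T)\leq \pi_k(x,t/T)+L T^{-\alpha}$ yields
\begin{align*}
|\pi_k(x_1,t_1/T)-\pi_k(x_2,t_2/T)| \leq L\,\|x_1-x_2\|^\alpha + L\,T^{-\alpha}\,|t_1-t_2|.
\end{align*}
Since both points lie in the same hypercube, $\|x_1-x_2\|\leq \sqrt{d}/m_T$ and $|t_1-t_2|\leq T/m_T$, so the intra-hypercube deviation is bounded by $L(\sqrt{d}/m_T)^\alpha + L T^{1-\alpha}/m_T$. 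For the specified slicing parameter $m_T = T^{1/(3\alpha+d+1)}$, this combined bound is of order $L(\sqrt{d+1}/m_T)^\alpha$ up to an absorbed constant, which is exactly the form that drives the intra-hypercube condition $2L(\sqrt{d+1})^\alpha t^{-\gamma\alpha}+6 t^{-z/2}\leq a_1 t^\theta$ appearing in Lemmas \ref{lemma:suboptimal1} and \ref{lemma:nearoptimal}, but now in dimension $d+1$.

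Next I would replay the three-part regret decomposition $R(T)=E[R_e(T)]+E[R_s(T)]+E[R_n(T)]$ of the proof of Theorem \ref{theorem:cos} with only two structural modifications: (i) the number of hypercubes is $(m_T)^{d+1}\leq 2^{d+1}T^{\gamma(d+1)}$, which enters Lemma \ref{lemma:explorations} and scales the training/exploration cost accordingly; (ii) the Lipschitz-type constant used in Lemmas \ref{lemma:suboptimal1} and \ref{lemma:nearoptimal} switches from $L(\sqrt{d})^\alpha$ to $L(\sqrt{d+1})^\alpha$. The remainder of the argument transports verbatim: the Chernoff--Hoeffding concentration of the sample means $\bar{r}^i_{k,l}(t)$ uses only the per-sample boundedness of rewards, and the training-phase bound on the probability that a cooperating learner invokes one of its own suboptimal classifiers uses only the separation of the three phases induced by $D_1,D_2,D_3$, none of which changes in the augmented setting.

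Finally I would re-optimize the free parameters. Setting $\theta=-z/2$, $\gamma=z/(2\alpha)$, $a_1 = 2L(d+1)^{\alpha/2}+6$, and equating the dominant exploration exponent $\gamma(d+1)+z$ with the near-optimal exponent $1-z/2$ gives $z = 2\alpha/(3\alpha+d+1)$ and $\gamma=1/(3\alpha+d+1)$, matching the control functions and slicing parameter stated in the theorem; summing the per-phase bounds of Lemmas \ref{lemma:explorations}--\ref{lemma:nearoptimal} with these values and $d\to d+1$ produces the stated expression. The step I expect to be the main obstacle is the absorption of the temporal contribution $L T^{1-\alpha}/m_T$ into the spatial Lipschitz bound when $\alpha<1$: naively this term can exceed the spatial one, so one either needs a mild consistency condition on $(\alpha,d)$, or one must inflate $a_1$ by a factor whose dependence on $T$ stays subdominant in the leading-order exponents. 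In either case the leading-order bound in the theorem statement is preserved.
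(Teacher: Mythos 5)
Your proposal takes exactly the paper's route: the paper's entire proof of Theorem~\ref{theorem:cos2} is the single remark that it is the same as the proof of Theorem~\ref{theorem:cos} with context dimension $d+1$ in place of $d$, so your dimension-lifted replay of Lemmas~\ref{lemma:explorations}--\ref{lemma:nearoptimal} with the re-optimized exponents $z=2\alpha/(3\alpha+d+1)$, $\gamma=1/(3\alpha+d+1)$ is precisely the intended argument. The obstacle you flag at the end is genuine but is one the paper silently sidesteps rather than one you introduce: telescoping the one-step temporal condition over the $T/m_T$ slots spanned by a hypercube gives a variation of $L\,T^{1-\alpha}/m_T$, which is dominated by $L\bigl(\sqrt{d+1}/m_T\bigr)^{\alpha}$ only when $\alpha\ge 1$, so for $\alpha<1$ one must read Assumption~\ref{ass:lipschitz2} as holding jointly on the augmented $(d+1)$-dimensional context space (time coordinate included), under which the difficulty disappears and your argument closes without any inflation of $a_1$.
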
 
\armv{
\begin{proof}
The proof is the same as proof of Theorem \ref{theorem:cos}, with context dimension $d+1$ instead of $d$.
\end{proof}
}
The above theorem implies that the regret in the presence of classification functions that learn online based on the data is $O(T^{(2\alpha+d+1)/(3\alpha+d+1)})$. \rev{From the result of Theorem \ref{theorem:cos2}, we see that our notion of context can capture any relevant information that can be utilized to improve the classification.
\armv{Specifically, we showed that by treating time as one dimension of the context we can achieve sublinear regret bounds.
}
Compared to Theorem \ref{theorem:cos}, in Theorem \ref{theorem:cos2}, the exploration rate is reduced from $O(T^{2\alpha/(2\alpha+d)})$ to $O(T^{2\alpha/(2\alpha+d+1)})$, while the memory requirement is increased from $O(T^{d/(3\alpha+d)})$ to $O(T^{(d+1)/(3\alpha+d+1)})$.}

\add{\vspace{-0.2in}}
\subsection{Computational complexity of CoS}

For each set $P_l \in {\cal P}_T$, classifier $i$ keeps the sample mean of rewards from $|{\cal F}_i| + M -1$ arms, while for a standard centralized bandit algorithm, the sample mean of the rewards of $|\cup_{k \in {\cal M}} {\cal F}_k|$ arms needs to be kept in memory. Since the number of sets in ${\cal P}_T$ is upper bounded by $2^d T^{d/(3\alpha+d)} $, the memory requirement is upper bounded by
\add{\vspace{-0.1in}}
\begin{align}
(|{\cal F}_i| + M -1) 2^d T^{d/(3\alpha+d)}. \label{eqn:cosmemory}
\end{align}
\rev{This means that the memory requirement is sublinearly increasing in $T$ and thus, in the limit $T \rightarrow \infty$ goes to infinity, however, CoS can be modified so that the available memory provides an upper bound on $m_T$. However, in this case the regret bound given in Theorem \ref{theorem:cos} may not hold.} The following example illustrates that for a data set with a reasonable size, the memory requirement is not very high.
For example for $\alpha=1$, $d=1$, we have $2^d T^{d/(3\alpha+d)} = 2 T^{1/4}$. If classifier $i$ learned through $T=10^8$ samples, and if $M=100$, $|{\cal F}_k| =100$, for all $k \in {\cal M}$, CoS only need to store at most $40000$ sample mean estimates, while a standard bandit algorithm which does not exploit any context information requires to keep $10000$ sample mean estimates. Although, the memory requirement is $4$ times higher than the memory requirement of a standard bandit algorithm, CoS is suitable for a distributed implementation, and classifier $i$ does not require any knowledge about the classification functions of other classifiers (except an upper bound on the number of classification functions of other classifiers). Moreover, the regret of CoS is sublinear with respect to the best distributed classification scheme, while the regret of a standard bandit algorithm is only sublinear with respect to the best fixed classifier.

\comment{
\rev{Another observation is that the regret scales only linearly with $M$ and $|{\cal F}_i|$ and it does not depend on $|{\cal F}_j|$, $j \in {\cal M}_{-i}$. This is because classifier $i$ does not learn about classification accuracies of classification functions of other classifier, but only helps them learn about the classification accuracies when necessary. We note that for a standard contextual algorithm \cite{langford2007epoch}, the regret scales linearly with $\sum_{j \in {\cal M}} |{\cal F}_j|$. } 
\rev{The result in Theorem \ref{theorem:cos} holds even when the context arrival is heterogeneous among the classifiers. In the following discussion we will show this result can only be slightly improved when it is known that the context arrival process is homogeneous among the classifiers.}
Consider the case that $q_i = q_j =q$ for all $i,j \in {\cal M}$ which means that the context arrival process to each classifier is identical. \rev{The following corollary shows that for all $P_l \in {\cal P}_T$ classifier $i$ will call a suboptimal classifier at most logarithmically many times.}
\begin{corollary}\label{cor:identical}
\rev{
When $q_i = q_j =q$ for all $i,j \in {\cal M}$, expected number of times classifier $i$ calls a suboptimal classifier is
\begin{align*}
O(\log (N^i_l(t))),
\end{align*}
for all $P_l \in {\cal P}_T$.
}
\end{corollary}
\begin{proof}
We need to show that for any $\gamma>0$
\begin{align*}
P(N^j_l(t) \leq (N^i_l(t))^\gamma)
\end{align*}
is small. Let
\begin{align*}
\mu_l = \int_{P_l} \bar{q}(x) dx,
\end{align*}
be the probability that a data belonging to set $P_l$ is received. Using a Chernoff-Hoeffding bound we can show that 
\begin{align*}
P \left( t\mu_l - \sqrt{t \log t} \leq N^i_l(t) \leq t\mu_l + \sqrt{t \log t} \right)
\geq 1- \frac{2}{t^2},
\end{align*}
and
\begin{align*}
P \left( (t\mu_l - \sqrt{t \log t})^\gamma \leq (N^i_l(t))^\gamma \leq (t\mu_l + \sqrt{t \log t})^\gamma \right)
\geq 1- \frac{2}{t^2},
\end{align*}
for all $t \geq 1$, $\gamma \in \mathbb{R}$, $i \in {cal M}$ and $P_l \in {\cal P}_T$.
Let 
\begin{align*}
{\cal A}(i,l,\gamma,t) = \{ (t\mu_l - \sqrt{t \log t})^\gamma \leq (N^i_l(t))^\gamma \leq (t\mu_l + \sqrt{t \log t})^\gamma  \}.
\end{align*}
Then we have 
\begin{align}
P(N^j_l(t) \leq (N^i_l(t))^\gamma) &\leq P(N^j_l(t) \leq (N^i_l(t))^\gamma, {\cal A}(i,l,\gamma,t), {\cal A}(j,l,1,t) ) + P({\cal A}(i,l,\gamma,t)^c) + P({\cal A}(j,l,1,t)^c) \notag\\
&\leq  P(N^j_l(t) \leq (N^i_l(t))^\gamma, {\cal A}(i,l,\gamma,t), {\cal A}(j,l,1,t) ) + \frac{4}{t^2}   \notag\\
&\leq P( (t\mu_l + \sqrt{t \log t})^\gamma > t\mu_l - \sqrt{t \log t}) + \frac{4}{t^2}. \label{eqn:bound2}
\end{align}
Note that the probability in (\ref{eqn:bound2}) is either 0 or 1 depending on whether the statement inside is true or false. Since $\gamma<1$ (actually it is very close to 0), taking the derivative of both sides, it can be seen that the rate of increase of $ t\mu_l - \sqrt{t \log t}$ is higher than the rate of increase of $(t\mu_l + \sqrt{t \log t})^\gamma$ when $t$ is large enough. Therefore there exists $\tau_{q, \gamma}$ such that the probability in (\ref{eqn:bound2}) is zero for all $t \geq \tau_{q, \gamma}$. From this result we see that the expected number of times steps for which $N^j_l(t) \leq (N^i_l(t))^\gamma$ is bounded above by
\begin{align*}
\tau_{q, \gamma} + \sum_{t'=1}^\infty \frac{4}{(t')^2},
\end{align*}
for all $i,j \in {\cal M}$ and $t >0$.

\end{proof}

%$O(\log (N^i_l(t)))$ .} To do this we need to show
%
By Corollary \ref{cor:identical} we conclude that the regret in each partition is at most $O( |{\cal K}_i| \log N^i_l(T))$. The following theorem provides an upper bound on the regret when $q_i = q_j =q$ for all $i,j \in {\cal M}$.
\begin{theorem}\label{thm:2}
When  $q_i = q_j =q$ for all $i,j \in {\cal M}$, the regret of CoS is upper bounded by
\begin{align*}
O( (M-1 + |{\cal F}_i|) T^{\frac{d}{d+1}}).
\end{align*}
\end{theorem}
\begin{proof}
 Since $\log$ is a concave function the regret is maximized when $N^i_l(T) = T/(m_T)^d$. Therefore the worst-case regret due to incorrect computations is at most
\begin{align*}
\sum_{l=1}^{(m_T)^d} O( |{\cal K}_i| \log N^i_l(T)) = O( (m_T)^d |{\cal K}_i| \log(T/(m_T)^d)).
\end{align*}
Similar to the worst-case scenario, the regret due to boundary crossings is at most $O(q_{\max} T/m_T)$. These terms are balance for $m_T = T^{1/d+1}$ which yields regret $O(T^{\frac{d}{d+1}})$.
\end{proof}

We observe that the regret bound proved in Theorem \ref{thm:2} is only slightly better than the regret bound $O(T^{\frac{d + \xi}{d+1}})$ for the worst-case scenario. This result shows that the worst-case performance difference between the two extreme cases is not much different.
\rev{Note that we used the fact that the data distribution has bounded density (\ref{eqn:boundeddensity}) in order to chose the slices according to $T$ such that we can control the regret in each slice.} This is almost always true, but in the worst case almost all data points may come from regions very close to the optimal boundary. In that case, the regret bound here will not work. Note that the regret depends on $q_{\max}$ and if it is too large the regret bound is not tight.
\rev{When proving Theorem \ref{thm:2}, we assume that a single instance arrives to each classifier at each time step. An alternative model is to assume that the instances arrive asynchronously to the classifiers in continuous time. For this let $\tau^i_l$ be the time of the $l$th arrival to classifier $i$. We assume that as soon as an instance arrives it is processed and then the true label is received. The delay between instance arrival, completion of classification and comparison with the true label can be captured by the cost $d_k$ for $k \in {\cal K}_i$. Based on this formulation let $J_i(t)$ be the number of instance arrivals to classifier $i$ by time $t$. Then we have the following corollary. 
\begin{corollary}
The regret given that $J_i(T) = n$ is upper bounded by 
\begin{align*}
\sum_{l=1}^{(m_T)^d} O( |{\cal K}_i| \log N^i_l(T)) = O( (m_T)^d |{\cal K}_i| \log(T/(m_T)^d))
\end{align*}
\end{corollary}

}
}
\armv{
\add{\vspace{-0.15in}}
\subsection{CoS with delayed feedback}

Next, we consider the case when the feedback is delayed. We assume that the label for data instance at time $t$ arrives with an $L_i(t)$ time slot delay, where $L_i(t)$ is a random variable such that $L_i(t) \leq L_{\max}$ with probability one for some $L_{\max}>0$ which is known to the algorithm. Algorithm CoS is modified so that it keeps in its memory the last $L_{\max}$ labels produced by classification and the indices are updated whenever a true label arrives. We have the following result for delayed label feedback.
\begin{corollary} \label{cor:uniform}
Consider the delayed feedback case where the true label of the data instance at time $t$ arrives at time $t+L_i(t)$, where $L_i(t)$ is a random variable with support in $\{0,1,\ldots, L_{\max}\}$, $L_{\max}>0$ is an integer. Let $R^{\textrm{nd}}(T)$ denote the regret of CoS with no delay by time $T$, and $R^{\textrm{d}}(T)$ denote the regret of modified CoS with delay by time $T$. Then we have,
%
%\begin{align*}
$R^{\textrm{d}}(T) \leq L_{\max} + R^{\textrm{nd}}(T)$.
%\end{align*}
%
\end{corollary}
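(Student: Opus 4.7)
The plan is to couple the modified (delayed) CoS algorithm with the original (no-delay) CoS algorithm on the same realization of data, contexts, and internal randomization, and then bound the extra regret incurred because of the delayed label feedback. The core idea is that the delay can only distort the algorithm's information state for the tail of the horizon, and this tail has bounded length $L_{\max}$.

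First I would establish a structural lemma about the delayed algorithm's state. Because each $L_i(t) \leq L_{\max}$ almost surely and the modified CoS updates $N^i_{k,l}$ and $\bar{r}^i_{k,l}$ immediately upon label arrival, at any time $t$ every instance observed at a time $t' \leq t - L_{\max}$ has had its label delivered and processed. Hence the counters and sample means of the delayed algorithm at time $t$ differ from those of the no-delay algorithm at time $t$ only with respect to instances from the last $L_{\max}$ slots.

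Second I would construct the coupling. With shared randomization for arm selection in training and exploration phases, and identical tie-breaking in exploitation, the two algorithms take the same action at every time slot $t$ for which the set of processed labels is identical. Using the structural lemma and the monotonicity of the exploration control functions $D_1(t), D_2(t), D_3(t)$, an induction on $t$ shows that any divergence in the training/exploration/exploitation branching is confined to at most $L_{\max}$ time slots (namely, the final slots where labels remain pending at time $T$).

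Third I would bound the extra regret contributed by these discrepancy slots. Since the per-slot instantaneous regret $\pi_{k^*(x_t)}(x_t) - d_{k^*(x_t)} - \mathbb{E}[I(k(x_t) = y_t) - d_{k(x_t)}]$ is bounded by $1$ (after absorbing the reward range into the constant), the cumulative extra regret over these at most $L_{\max}$ slots is at most $L_{\max}$, yielding
\begin{align*}
R^{\textrm{d}}(T) \leq R^{\textrm{nd}}(T) + L_{\max}.
\end{align*}

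The main obstacle is making the coupling fully rigorous at the branching points of the algorithm: a temporarily lagging counter could in principle flip the algorithm from exploitation to training or exploration, or vice versa, and one must show that each such flip can be charged to a unique ``pending-label'' slot. Monotonicity of the control functions $D_1, D_2, D_3$ and the fact that counters are non-decreasing over time are the key structural ingredients that make the accounting work out.
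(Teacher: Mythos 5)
There is a genuine gap at the heart of your coupling argument: the claim that ``any divergence in the training/exploration/exploitation branching is confined to at most $L_{\max}$ time slots, namely the final slots where labels remain pending at time $T$.'' Your structural lemma only says that at \emph{every} time $t$ the delayed algorithm's counters and sample means lag those of the no-delay algorithm by the contribution of up to $L_{\max}$ still-pending instances; it does not say the two states agree except near the horizon. Consequently the branching can flip at interior times throughout the run: for example, at a time $t$ far from $T$ the no-delay algorithm may have $N^i_{k,l}(t) > D_1(t)$ and exploit, while the delayed algorithm's counter for the same $(k,l)$ sits just below $D_1(t)$ because a recent label is pending, forcing an extra exploration step. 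Each arm--hypercube pair can suffer up to $L_{\max}$ such extra exploration/training steps, so the naive count of discrepancy slots is of order $L_{\max}\,|{\cal K}_i|\,(m_T)^d$, not $L_{\max}$. Worse, once a single action diverges, the two trajectories differ for reasons beyond pending labels (different arms were pulled), so the coupling does not re-synchronize and the induction you sketch cannot close. The same issue afflicts exploitation steps: the delayed sample means are built from up to $L_{\max}$ fewer rewards, which can flip the $\argmax$ in (\ref{eqn:maximizer}) at any interior time, and your argument gives no mechanism to charge each such flip to a unique pending-label slot. You flag this obstacle yourself at the end, but it is not a technicality to be smoothed over --- it is where the proof actually lives, and as stated the accounting does not work out to $L_{\max}$.

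The paper's proof goes a different route that sidesteps the per-cell accounting entirely: it observes that the worst case is when every label is delayed by exactly $L_{\max}$, which is equivalent to running CoS started $L_{\max}$ slots late; the Chernoff--Hoeffding concentration of the sample means is unaffected by this uniform shift (only the number of samples at each time is reduced by a bounded amount), and the entire loss is charged to an \emph{initial} segment of length $L_{\max}$ with per-slot regret at most one, rather than to a terminal segment of pending labels. If you want to pursue your coupling idea, you would need to either prove the uniform-worst-case reduction the paper uses, or accept a weaker additive term of order $L_{\max}\,|{\cal K}_i|\,(m_T)^d$ (which still preserves the sublinear order of Theorem \ref{theorem:cos} but does not give the stated bound $R^{\textrm{d}}(T) \leq L_{\max} + R^{\textrm{nd}}(T)$).
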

\remove{
\begin{proof}
By a Chernoff-Hoeffding bound, it can be shown that the probability of deviation of the sample mean accuracy from the true accuracy decays exponentially with the number of samples. A new sample is added to sample mean accuracy whenever the true label of a previous classification arrives. Note that the worst case is when all labels are delayed by $L_{\max}$ time steps. This is equivalent to starting the algorithm with an $L_{\max}$ delay. 
\end{proof}
}

The cost of label delay is additive which does not change the sublinear order of the regret. The memory requirement for CoS with no delay is $ |{\cal K}_i| (m_T)^d = 2^d (|{\cal F}_i| + M -1) T^{\frac{d}{ 3\alpha + d}}$, while memory requirement for CoS modified for delay is $L_{\max} + |{\cal K}_i| (m_T)^d$. Therefore, the order of memory cost is also independent of the delay. However, we see that the memory requirement significantly grows with the time horizon $T$ when the dimension of the context space $d$ is high, which makes CoS infeasible for learning from a very large set of samples. The algorithm we propose in the next section solves this issue by adaptively creating a partition of the context space which requires less than $(m_T)^d$ hypercubes.
%Essentially, it zooms into the parts of the space where there is a lot of data arrivals, capturing the tradeoff between accuracy and learning.
}
\add{\vspace{-0.2in}}
\section{Numerical Results} \label{sec:numerical}

In this section we provide numerical results for CoS using network security data from KDD Cup 1999 data set. We compare the performance of our learning algorithm with {\em AdaBoost} \cite{freund1995desicion} and the online version of AdaBoost called {\em sliding window AdaBoost} \cite{oza2001online}. 

The network security data has 42 features. The goal is to predict at any given time if an attack occurs or not based on the values of the features.
We run the simulations for three different context information; (i) context is the label at the previous time step, (ii) context is the feature named {\em srcbytes}, which is the number of data bytes from source to destination, (iii) context is time. All the context information is normalized to be in $[0,1]$. There are $4$ local learners. Each local learner has $2$ classification functions. Classification costs $d_k$ is set to $0$ for all $k \in {\cal K}_1$.

%Difference from Luca's data: classifier 3 is all one, classifier 8 is all zero
All classification functions are trained using 5000 consecutive samples from different segments of the network security data. Then, they are tested on $T = 20000$ consecutive samples. We run simulations for two different sets of classifiers. 
In our first simulation S1, there are two good classifiers that have low number of errors on the test data, while in our second simulation S2, there are no good classifiers. The types of classification functions used in S1 and S2 are given in Table \ref{tab:sim_setup} along with the number of errors each of these classification functions made on the test data. From Table \ref{tab:sim_setup} we can observe that the error percentage of the best classification function is $3$ in S1, while it is $47$ in S2. A situation like in S2 can appear when the distribution of the data changes abruptly so that the classification functions trained on the old data becomes inaccurate for the new data. In our numerical results we will show how the context information can be used to improve the performance in both S1 and S2.
The accuracies of the classifiers on the test data are unknown to the learners so they cannot simply choose the best classification function. 
In all our simulations, we assume that the test data sequentially arrives to the system and the label is revealed with a one step delay. 

\begin{table}[t]
\centering
{\fontsize{8}{6}\selectfont
\setlength{\tabcolsep}{.1em}
\begin{tabular}{|l|c|c|c|c|c|}
\hline
Learner & 1 &2 &3 & 4\\
\hline
Classification  & Naive Bayes,  &Always $1$, & RBF Network, & Random Tree,  \\
Function (S1)  & Logistic & Voted Perceptron &J48 & Always $0$\\
\hline
 Error  & 47,  & 53,  & 47, & 47,  \\
percentage (S1) & 3 & 4 &  47 & 47\\
\hline
 Classification  & Naive Bayes,  &Always $1$,  & RBF Network,  & Random Tree,  \\
Function (S2) & Random & Random & J48 & Always $0$ \\
\hline
 Error & 47,  & 53,  & 47,  & 47,  \\ 
percentage (S2) & 50 & 50 & 47 & 47 \\
\hline
\end{tabular}
}
\add{\vspace{-0.1in}}
\caption{Simulation setup}
\vspace{-0.25in}
\label{tab:sim_setup}
\end{table}

Since we only consider single dimensional context, $d=1$. However, due to the bursty, non-stochastic nature of the network security data we cannot find a value $\alpha$ for which Assumption \ref{ass:lipschitz2} is true. Nevertheless, we consider two cases, C1 and C2, given in Table \ref{tab:par_setup}, for CoS parameter values.
In C2, the parameters for CoS are selected according to Theorem \ref{theorem:cos}, assuming $\alpha=1$. 
In C1, the parameter values are selected in a way that will limit exploration and training. However, the regret bound in Theorem \ref{theorem:cos} may not hold for these values. 

\begin{table}[t]
\centering
{\fontsize{8}{6}\selectfont
\setlength{\tabcolsep}{.3em}
\begin{tabular}{|l|c|c|c|c|}
\hline
 & $D_1(t)$ & $D_2(t)$ & $D_3(t)$ & $m_T$  \\
\hline
(C1) CoS & $t^{1/8} \log t$ & $2 t^{1/8} \log t$ & $t^{1/8} \log t$ & $\lceil T \rceil^{1/4}$  \\
\hline
%(C1) DCZA & $t^{1/8} \log t$ & $2 t^{1/8} \log t$ & $t^{1/8} \log t$ &  & $1$ & $4$ \\
%\hline
(C2) CoS & $t^{1/2} \log t$ & $2 t^{1/2} \log t$ & $t^{1/2} \log t$ & $\lceil T \rceil^{1/4}$  \\
\hline
%(C2) DCZA & $t^{2/p} \log t$ & $2 t^{2/p} \log t$ & $t^{2/p} \log t$ &  & $1$ & $(3+\sqrt{17})/2$ \\
%\hline
\end{tabular}
}
\add{\vspace{-0.05in}}
\caption{Parameters for CoS}
\label{tab:par_setup}
\add{\vspace{-0.4in}}
\end{table}

In our simulations we consider the performance of learner 1. Table \ref{tab:sim_results} shows under each simulation and parameter setup the percentage of errors made by CoS and the percentage of time steps spent in training and exploration phases for learner 1. We compare the performance of CoS with AdaBoost and sliding window AdaBoost (SWA) whose error rates are also given in Table \ref{tab:sim_results}. AdaBoost and SWA are trained using 20000 consecutive samples from the data set different from the test data. 
SWA re-trains itself in an online way using the last $w$ observations, which is called the window length. Both AdaBoost and SWA are ensemble learning methods which require learner 1 to combine the predictions of all the classification functions. Therefore, when implementing these algorithms we assume that learner 1 has access to all classification functions and their predictions, whereas when using our algorithm we assume that learner 1 only has access to its own classification functions and other learners but not their classification functions. Moreover, learner 1 is limited to use a single prediction in CoS. This may be the case in a real system when the computational capability of local learners are limited and the communication costs are high.

First, we consider the case when the parameter values are as given in C1. We observe that when the context is the previous label, CoS performs better than AdaBoost and SWA for both S1 and S2. This result shows that although CoS only uses the prediction of a single classification function, by exploiting the context information it can perform better than ensemble learning approaches which combine the predictions of all classification functions. We see that the error percentage is smallest for CoS when the context is the previous label.
This is due to the bursty nature of the attacks.
\armv{The exploration percentage for the case when context is the previous label is larger for DCZA than CoS. 
As we discussed in Section \ref{sec:reduction}, the number of explorations of DCZA can be reduced by utilizing the observations from the old hypercube to learn about the accuracy of the arms in a newly activated hypercube.
}
When the context is the feature of the data or the time, for S1, CoS  performs better than AdaBoost while SWA with window length $w=100$ can be slightly better than CoS. But again, this difference is not due to the fact that CoS makes too many errors. It is because of the fact that CoS explores and trains other classification functions and learners. AdaBoost and SWA does not require these phases. But they require communication predictions of all classification functions and communication of all local learners with each other at each time step. Moreover, SWA re-trains itself by using the predictions and labels in its time window, which makes it computationally inefficient.
Another observation is that using the feature as context is not very efficient when there are no good classifiers (S2). However, the error percentage of CoS ($39\%$) is still lower than the error percentage of the best classifier in S2 which is $47\%$.
\comment{
In Figure \ref{fig:CoS_res} and \ref{fig:DCZA_res}, the number of classification errors is given as a function of time for CoS and DCZA, respectively for S1 and C1, for all three different context information. The error function is {\em spike-shaped} because of the training phases that happens when a change in the context occurs. Note that when the context is previous label or {\em srcbytes}, the spikes follow the changes in the data, while when the context is time, the spikes are uniformly distributed. 

\begin{figure}
\begin{center}
\includegraphics[width=0.8\columnwidth]{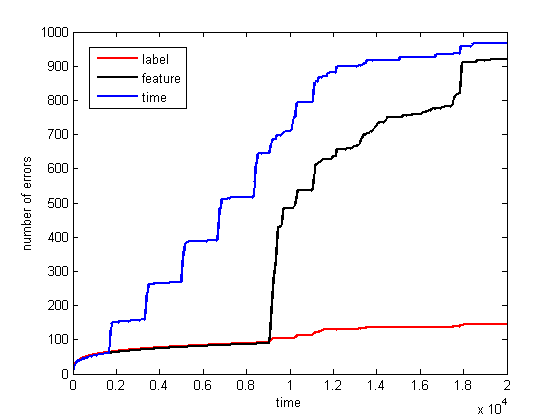}
\caption{Number of errors made by CoS as a function of time.} 
\label{fig:CoS_res}
\end{center}
\end{figure}

\begin{figure}
\begin{center}
\includegraphics[width=0.8\columnwidth]{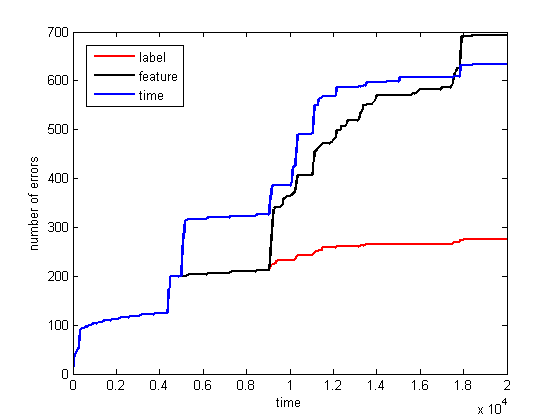}
\caption{Number of errors made by DCZA as a function of time.} 
\label{fig:DCZA_res}
\end{center}
\end{figure}
}

We observe that CoS performs poorly when the set of parameters is given by C2.
This is due to the fact that the percentage of training and exploration phases is too large for C2, thus CoS cannot exploit the information it gathered efficiently. Another important reason for the poor performance is the short time horizon. As the time horizon grows, we expect the exploration and training rates to decrease, and the exploitation rate to increase which will improve the performance.

\begin{table}[t]
\centering
{\fontsize{8}{6}\selectfont
\setlength{\tabcolsep}{.3em}
\begin{tabular}{|l|c|c|c|}
\hline
 & Error $\%$ & Training $\%$ & Exploration $\%$  \\
\hline
(C1,S1) CoS & 0.7, 4.6, 4.8 & 0.3, 3, 2.8 & 1.4, 6.3, 8.5 \\
\hline
%(C1,S1) DCZA & 1.4, 3.5, 3.2 & 0.4, 1.3, 0.9 & 4, 5.9, 7 \\
%\hline
(C1,S2) CoS & 0.9, 39, 10 & 0.3, 3, 2.8 & 1.5, 6.5, 8.6 \\
\hline
%(C1,S2) DCZA & 1.9, 38, 4.8 & 0.4, 1.3, 1 & 4, 6, 7 \\
%\hline
(C2,S1) CoS & 16, 14, 41 & 8.5, 16, 79 & 55 27 20\\
\hline
%(C2,S1) DCZA & 31, 29, 29 & 33 19 87 & 66 66 12 \\
%\hline
(S1, S2) AdaBoost & 4.8, 53 & & \\
\hline
(S1, S2) SWA ($w=100$) & 2.4, 2.7 & &\\
\hline
(S2, S2) SWA ($w=1000$) & 11, 11 & & \\
\hline
\end{tabular}
}
\add{\vspace{-0.05in}}
\caption{Simulation Results}
\label{tab:sim_results}
\vspace{-0.4in}
\end{table}

% no \IEEEPARstart
\add{\vspace{-0.15in}}
\section{Conclusion} \label{sec:conc}
In this paper we developed a novel online learning algorithm for decentralized Big Data classification using context information about the high dimensional data.
We proved sublinear regret results for this algorithm and showed via numerical results that it outperforms ensemble learning approaches in a network security application.
\rmv{An interesting research direction is to see the performance of CoS when combined with ensemble learning approaches. This will increase the communication and computation costs of the learners but the improvement in classification accuracy can be large compared to existing online ensemble learning methods such as SWA.}
\add{\vspace{-0.2in}}

\remove{
 \appendices
 \section{A bound on divergent series} \label{app:seriesbound}
 For $p>0$, $p \neq 1$,
\begin{align*}
\sum_{t=1}^{T} \frac{1}{t^p} \leq 1 + \frac{T^{1-p} -1}{1-p}
\end{align*}
\begin{proof}
See \cite{chlebus2009approximate}.
\end{proof}
}
\bibliographystyle{IEEE}
\bibliography{OSA}

%\appendices
%\input{proofs}
%\appendix

\end{document}